\def\eqref#1{equation~\ref{#1}}
\def\1{\bm{1}}
\DeclareMathAlphabet{\mathsfit}{\encodingdefault}{\sfdefault}{m}{sl}
\SetMathAlphabet{\mathsfit}{bold}{\encodingdefault}{\sfdefault}{bx}{n}
\newcommand{\E}{\mathbb{E}}
\newcommand{\R}{\mathbb{R}}
\definecolor{lightred}{RGB}{255, 230, 230}
\definecolor{lightgreen}{RGB}{230, 255, 230}
\definecolor{darkgray}{RGB}{70, 70, 70}
\definecolor{headercolor}{RGB}{70, 130, 180}
\definecolor{lightblue}{RGB}{230, 245, 255}
\definecolor{lightgreen}{RGB}{230, 255, 230}
\definecolor{lightpink}{RGB}{255, 230, 240}
\definecolor{headercolor}{RGB}{70, 130, 180}
\definecolor{cacheblue}{RGB}{28, 105, 164}
\definecolor{actionred}{RGB}{202, 0, 32}
\definecolor{decodeorange}{RGB}{244, 165, 130}
\definecolor{updategreen}{RGB}{5, 113, 176}
\definecolor{myorange}{HTML}{FFA500}
\definecolor{myblue}{HTML}{0033CC}
\definecolor{myyellow}{HTML}{FFFFCC}
\newtheorem{theorem}{Theorem}[section]
\newtheorem{lemma}[theorem]{Lemma}
\theoremstyle{definition}
\newtheorem{definition}[theorem]{Definition}
\newtheorem{assumption}[theorem]{Assumption}
\theoremstyle{remark}
\newcommand{\norm}[1]{\left\|#1\right\|}
\title{Attention Is All You Need \\ for KV Cache in Diffusion LLMs}
\author{Quan Nguyen-Tri$^{1,2}$\thanks{Equal contribution. $^{\dagger}$Corresponding author.}~~, Mukul Ranjan$^{1*}$, Zhiqiang Shen$^{1\dagger}$ \\
$^1$VILA Lab, MBZUAI, Abu Dhabi, UAE~~ $^{2}$FPT AI Residency, Vietnam \\
\texttt{\{quan.tri,mukul.ranjan,zhiqiang.shen\}@mbzuai.ac.ae} \\
}
\begin{document}

\maketitle
\vspace{-15pt} 
{\small Project page: \url{https://vila-lab.github.io/elastic-cache-webpage/} \par}
\vspace{5pt} 

\begin{abstract}

This work studies how to adaptively recompute key–value (KV) caches for diffusion large language models (DLMs) to maximize prediction accuracy while minimizing decoding latency. Prior methods' decoders recompute QKV for all tokens at every denoising step and layer, despite KV states changing little across most steps, especially in shallow layers, leading to substantial redundancy. We make three observations: (1) distant {\bf MASK} tokens primarily act as a length-bias and can be cached block-wise beyond the active prediction window; (2) KV dynamics increase with depth, suggesting that selective refresh starting from deeper layers is sufficient; and (3) the most-attended token exhibits the smallest KV drift, providing a conservative lower bound on cache change for other tokens. Building on these, we propose {\bf Elastic-Cache}, a training-free, architecture-agnostic strategy that jointly decides {\em when} to refresh (via an attention-aware drift test on the most-attended token) and {\em where} to refresh (via a depth-aware schedule that recomputes from a chosen layer onward while reusing shallow-layer caches and off-window MASK caches). Unlike fixed-period schemes, Elastic-Cache performs adaptive, layer-aware cache updates for diffusion LLMs, reducing redundant computation and accelerating decoding with negligible loss in generation quality.
Experiments on LLaDA-Instruct, LLaDA-1.5, and LLaDA-V across mathematical reasoning and code generation tasks demonstrate consistent speedups: $8.7\times$ on GSM8K (256 tokens), and $45.1\times$ on longer sequences, while consistently maintaining higher accuracy than the baseline. Our method achieves significantly higher throughput ($6.8\times$ on GSM8K) than existing confidence-based approaches while preserving generation quality, enabling practical deployment of diffusion LLMs. 
\end{abstract}

\section{Introduction}

Diffusion large language models (DLMs)~\citep{li2025survey} have recently emerged as a compelling alternative to autoregressive Transformers~\citep{radford2018improving,achiam2023gpt}, yet their iterative denoising procedure makes inference particularly compute-intensive. In standard implementations, each decoding step recomputes queries, keys, and values (QKV) for every token at every layer, even though the underlying key–value (KV) states change only marginally across most steps. This all-tokens, all-layers recomputation incurs substantial latency and memory traffic, ultimately limiting practical deployment. Our goal in this study is to determine {\em how and when} to adaptively recompute the KV cache during decoding so as to maximize prediction quality while minimizing wall-clock latency.

A defining property of diffusion LLM decoding is the progressive unmasking of tokens under a length- and structure-aware attention pattern. This induces heterogeneous KV dynamics: shallow layers tend to stabilize quickly as they encode local lexical structure, whereas deeper layers continue to adjust global, semantic dependencies. We formalize this with a notion of KV drift: the step-to-step change in cached keys and values, and observe two consistent trends: (i) drift is small for most steps, and (ii) drift grows with layer depth. These trends suggest that indiscriminate recomputation is wasteful, and that targeted refreshes could preserve accuracy while slashing cost.

Prior acceleration methods for diffusion (and related) decoders typically refresh the KV cache on a fixed schedule, e.g., every $k$ iterations without regard to instance difficulty, current attention patterns, or layerwise variability. Such fixed-period policies leave performance on the table: they recompute when nothing has changed and miss updates precisely when rapid semantic revisions occur. Moreover, by treating all layers uniformly, they over-service shallow layers whose representations have already converged, while under-servicing deeper layers where changes matter most. This motivates an adaptive, attention-aware alternative.

Our approach is built on three empirical observations. First, distant MASK tokens exert negligible influence on unmasking the current token and behave primarily as a length-bias prior; thus, their KV can be block-cached outside the active prediction window to avoid redundant work. Second, KV drift increases with depth, so refreshes should start at an automatically learned boundary layer $\ell^\star$ ({determined by attention threshold and adapted to each input decoding step})  and apply only to deeper layers, reusing shallow-layer caches. Third, the most-attended token at a step typically exhibits the smallest drift, providing a conservative lower bound on KV changes across the context. Monitoring this drift yields a reliable, low-overhead trigger for deciding whether a global refresh is warranted.

Based on these ideas, we propose {\bf Elastic-Cache}, a training-free, architecture-agnostic strategy that couples {\em Attention-Aware KV Cache Update} with {\em Layer-Aware KV Cache Update}. The attention-aware module computes a lightweight drift statistic on the most-attended token; if the statistic exceeds a threshold, a refresh is triggered, otherwise cached KVs are reused. The layer-aware module then refreshes only layers $\ell \ge \ell^\star$, while shallow layers retain their caches, and off-window MASK tokens remain block-cached. Together, these mechanisms align recomputation with {\em where and when} the model's beliefs actually change, minimizing unnecessary QKV work.

In contrast to fixed-period baselines, our {\bf Elastic-Cache} adapts to the input, step, and layer granularity together. It reduces compute by skipping recomputation during stable phases, focuses effort on deeper layers during semantic revisions, and leverages block-wise caching for distant MASK tokens. Conceptually, the method reframes KV management as an {\bf attention-guided control problem}: attention estimates {\em which} tokens matter; drift detects {\em how much} the state has changed; and the layer boundary $\ell^\star$ encodes {\em where} updates pay off. This yields a practical pathway to low-latency diffusion LLM decoding without modifying training or the base architecture.

Our contributions of this work: 
\vspace{-0.1in}
\begin{itemize}
    \item We diagnose redundancy in diffusion LLM decoding and introduce KV drift as a principled signal for adaptive cache management. 
    \item We propose {\bf Elastic-Cache}, the first (to our best knowledge) adaptive, layer-aware KV refresh policy for diffusion LLMs that jointly decides {\em when} to recompute (attention-aware drift test) and {\em where} to recompute (depth-selective updates). 
    \item We develop {\em block-wise MASK caching} to eliminate needless updates outside the prediction window. We provide comprehensive empirical experiments and ablations showing that our {\bf Elastic-Cache} preserves generation quality while substantially reducing decoding latency across tasks and model scales.
\end{itemize}
\vspace{-0.15in}
\section{Preliminary}
\vspace{-0.05in}
\subsection{Masked Diffusion Models}

Masked Diffusion Models (MDMs), absorbing-state discrete diffusion, build on D3PM~\citep{austin2021structured} and its continuous-time variant~\citep{campbell2022continuous}, replacing tokens with a special $\text{MASK}$ along a forward process~\citep{sahoo2024simple,shi2024simplified} at timestep $t$:
\begin{equation}\label{eq:forward_mask}
q_{t|0}(\boldsymbol{x}_t | \boldsymbol{x}_0) = \prod_{i=1}^L q_{t|0}(x_t^i | x_0^i) = \prod_{i=1}^L \text{Cat}(x_t^i; (1-t)\delta_{x_0^i} + t \delta_{\text{MASK}})
\end{equation}
where $t \in [0, 1]$ controls interpolation between the original data $\boldsymbol{x}_0$ (at $t=0$) and a fully masked sequence (at $t=1$), $\text{Cat}(\cdot)$ denotes the categorical distribution. 
A parametric model $p_{\boldsymbol{\theta}}$ learns the reverse denoising; generation starts from all $\text{MASK}$ and iteratively unmasks by sampling $p_{\boldsymbol{\theta}}(x_0^i|\boldsymbol{x}t)$. Recent theory (MDLM~\citep{shi2024simplified,sahoo2024simple}, RADD~\citep{ou2024your}) simplifies training from a variational bound to a reweighted cross-entropy over masked positions:
\begin{equation}
\mathcal{L}_{\text{MDM}} = \int_{0}^1 \frac{1}{t} \mathbb{E}_{q_{t|0}(\boldsymbol{x}_t | \boldsymbol{x}_0)} \left[ \sum_{i: x_t^i = \text{MASK}} -\log p_{\boldsymbol{\theta}}(x_0^i | \boldsymbol{x}_t) \right] \mathrm{d}t
\end{equation}
This formulation scales to LLMs as diffusion language models (DLMs), with LLaDA~\citep{nie2025largelanguagediffusionmodels} and Dream-7B~\citep{dream2025} matching autoregressive performance while enabling parallel decoding and flexible infilling.

\vspace{-0.1in}
\subsection{Key-Value Cache in Transformers}
Transformer-based language models achieve computational efficiency during autoregressive generation through Key-Value (KV) caching~\citep{pope2023efficiently}. In causal attention, each layer projects the current hidden state $\mathbf{H}^t$ into query, key, and value representations using learned projection matrices $\mathbf{W_Q}, \mathbf{W_K}, \mathbf{W_V}$. At decoding step $t$, the attention computation for the current token follows:
\begin{equation}
\mathbf{A}^t_{[t]} = \text{softmax}\left(\frac{\mathbf{Q}^t_{[t]}(\mathbf{K}^t_{[1:t]})^\top}{\sqrt{d_k}}\right)\mathbf{V}^t_{[1:t]}, \quad \text{KV cache: } \left \{\begin{matrix}
\mathbf{K}^t_{[1:t]} = \text{concat}(\mathbf{K}^{t-1}_{[1:t-1]}, \mathbf{K}^{t}_{[t]}), \\ \mathbf{V}^t_{[1:t]} = \text{concat}(\mathbf{V}^{t-1}_{[1:t-1]}, \mathbf{V}^{t}_{[t]})
\end{matrix} \right..
\end{equation}
To avoid redundant computation, previous key-value pairs are cached and reused. This caching strategy is effective because in causal attention, previously computed key-value pairs remain invariant throughout decoding ($\mathbf{K}^{t-1}_{[1:t-1]} = \mathbf{K}^{t}_{[1:t-1]}$), enabling efficient reuse without affecting model output.

\noindent{\bf KV-Cache in Bidirectional Attention.} However, in diffusion models, bidirectional attention allows all positions to attend to each other, invalidating the invariance assumption of traditional KV-cache. As dKV-Cache~\citep{ma2025dkv} observes, token representations evolve across denoising steps, making cached keys/values stale. This dynamic behavior necessitates rethinking caching strategies for diffusion language models.

\begin{figure}
    \centering
    \includegraphics[width=1.0\linewidth]{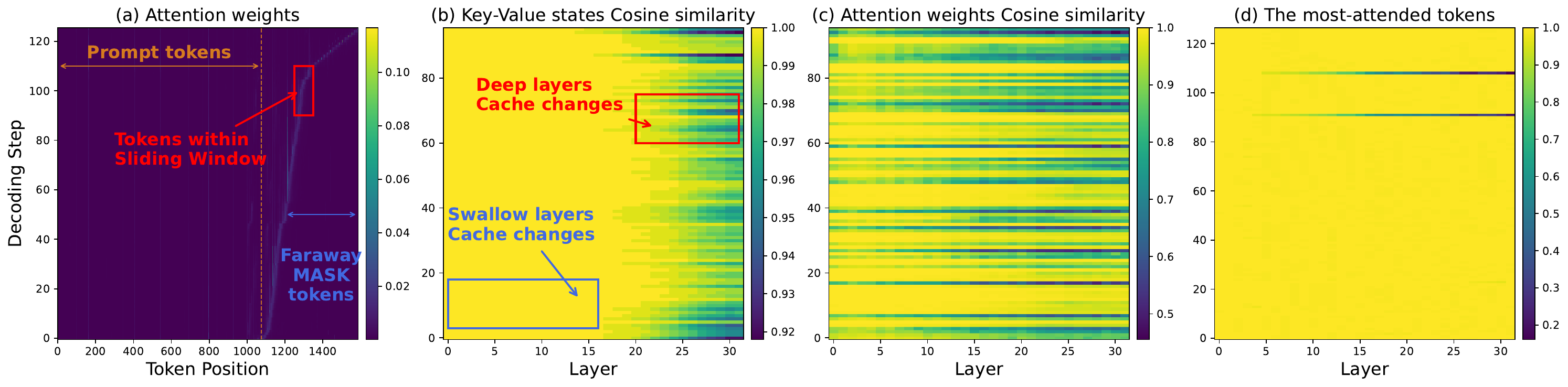}
    \vspace{-0.25in}
    \caption{Visualization of our motivation. (a) MASK tokens located near each other receive high attention, while those situated far apart have minimal influence. (b) Over time, the representations in the KV states of cached tokens evolve, with deeper layers experiencing more substantial changes. (c) The changes in attention weights of most-attended tokens exhibit similar patterns to the changes in KV states of all cached tokens. (d) KV states of the most-attended tokens have the least changes.}
    \vspace{-0.1in}
    \label{fig:motivation}
\end{figure}

\begin{figure}[t]
    \centering
    \includegraphics[width=1\linewidth]{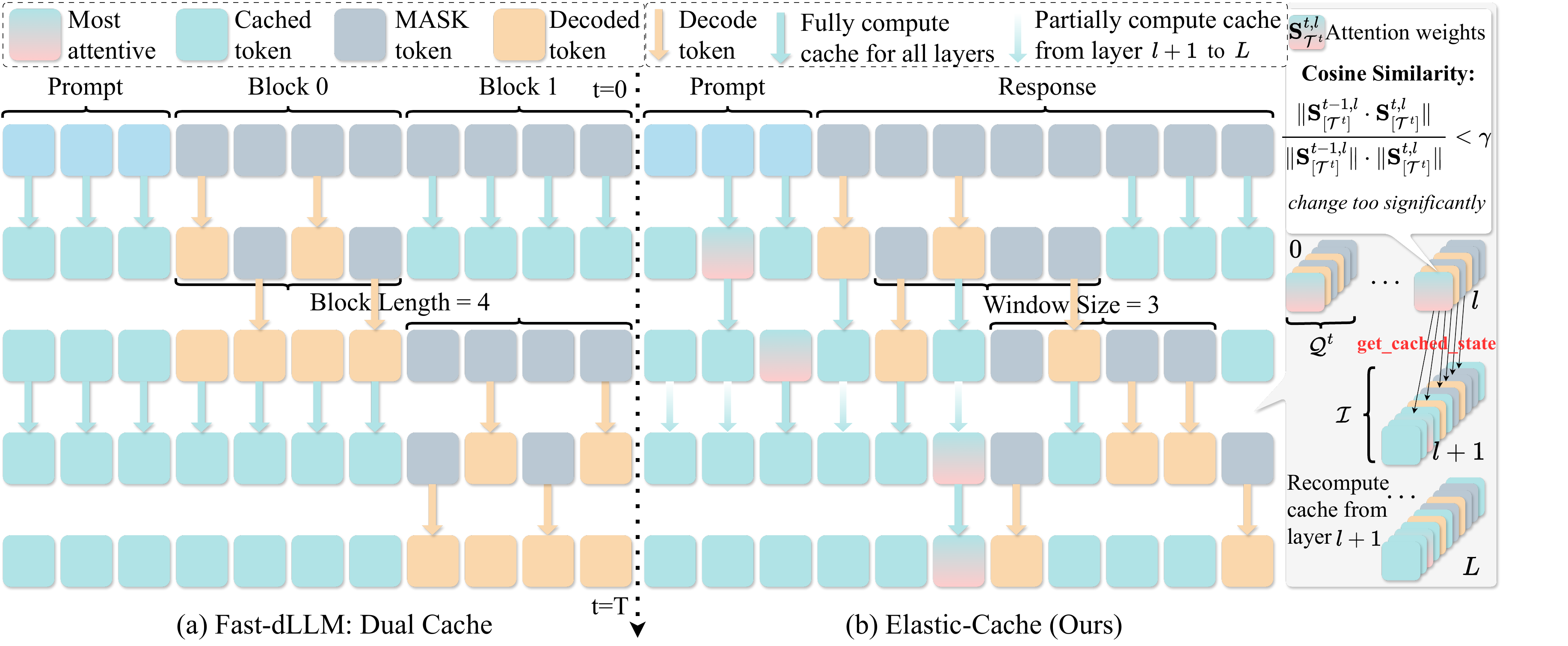}
      \vspace{-0.2in}
    \caption{Illustration of the Key-Value cache method for diffusion LLMs. (a) The fast-dLLM \citep{wu2025fast} block-wise decoding method caches the Key-Value of all tokens outside the current block at each step. The KV cache is updated after completing a block of decoding. (b) Our proposed method, Elastic-Cache, caches the key-value of tokens outside a sliding window that flexibly moves through the sentence from left to right at each iteration. When the attention weights corresponding to the most-attended tokens (one for each layer) change significantly at a layer $l$, we start recomputing the KV cache from layer $l+1$ to the last layer.}
    \label{fig:overview}
      \vspace{-0.1in}
\end{figure}

\vspace{-0.1in}
\section{Methodology}
\vspace{-0.05in}

\subsection{Our Framework Overview and Motivation}

Diffusion LLMs differ from autoregressive decoders in that their key–value (KV) states evolve across denoising steps due to bidirectional dependencies. Our objective is to adaptively decide \emph{when} and \emph{where} to recompute the KV cache to preserve accuracy while minimizing latency. Baseline decoders recompute QKV for all tokens and layers at every step, despite \emph{negligible} KV changes for most steps and especially in \emph{shallow} layers (Fig.~\ref{fig:motivation}b); deeper layers exhibit larger drift. Rather than fixed-period refreshes \citep{wu2025fast, ma2025dkv, liu2025dllm}, we propose \textbf{Elastic-Cache}, the first (to our knowledge) \emph{adaptive, layer-aware} KV update policy for diffusion LLMs that jointly optimizes timing and location of recomputation.

Our design is driven by three observations. (1) Distant \texttt{MASK} tokens mainly act as a length prior and exert minimal influence on the current unmasking, we therefore block-cache their KV beyond the active prediction window (Fig.~\ref{fig:motivation}a). (2) KV drift grows with depth, refresh should start at a boundary layer and apply only to deeper layers (Fig.~\ref{fig:motivation}b). (3) The most-attended tokens typically shows the smallest KV change (Fig.~\ref{fig:motivation}d), giving a conservative lower bound for others, we use its drift as a lightweight trigger for refresh (Fig.~\ref{fig:motivation}c). Fig.~\ref{fig:overview} summarizes the pipeline.
To the end, we proposed Elastic-Cache, a flexible method for key-value caching in diffusion large language models. Fig.~\ref{fig:overview} provides a visual representation of the overall pipeline of our proposed method.

\vspace{-0.05in}
\subsection{Sliding window decoding and KV caching}
Formally, let $\mathcal{I} = \{1, 2, \ldots, N\}$ represent all positions. At decoding step $t$, let $\mathcal{D}^t$ denote newly decoded positions and $\mathcal{M}^t$ denote remaining masked positions, where $\mathcal{M}^{t-1} = \mathcal{M}^{t} \cup \mathcal{D}^t$. Denotes $\mathcal{D}^{<t} = \bigcup_{i} \{\mathcal{D}^{i}\}_{i=1}^t$ as the set of all decoded tokens up to time step $t$. Initially, at $t=0$ we compute the attention for each layer $l$: 
\begin{equation}
\label{eq:update cache}
\mathbf{A}^{0,l}_{[\mathcal{I}]} = \text{softmax}\left(\frac{\mathbf{Q}^{0,l}_{[\mathcal{I}]}(\mathbf{K}^{0,l}_{[\mathcal{I}]})^\top}{\sqrt{d_k}}\right)\mathbf{V}^{0,l}_{[\mathcal{I}]},
\quad \quad \quad \text{initialize KV cache: } \left \{\begin{matrix}
\mathbf{\tilde K}^{0,l}_{[\mathcal{I}]} = \mathbf{K}^{0,l}_{[\mathcal{I}]} \\ \mathbf{\tilde V}^{0,l}_{[\mathcal{I}]} = \mathbf{V}^{0,l}_{[\mathcal{I}]}
\end{matrix} \right..
\end{equation}
For each subsequence iteration $t$ ranging from 1 to $T$, The model perform prediction for newly decoded position $\mathcal{D}^{t}$ and the remaining masked position $\mathcal{M}^{t}$. To enhance efficiency, we only perform predictions for masked positions that are closest to the left and form a sliding window of size $\beta$, denoted as $\mathcal{M}^{t}_{\beta} = \mathcal{M}^{t}_{[1:\beta]}$. We also have that $\mathcal{M}^{t-1}_{\beta} = \mathcal{M}^{t}_{\beta} \cup \mathcal{D}^{t}$. We observe that masked tokens within the sliding window attend closely to one another, while those outside receive little attention. This allows safe reuse of cached KV for out-of-window MASKs without affecting current predictions. At step $t$, attention is computed only for tokens in the sliding window $\mathcal{M}^{t-1}_\beta$:
\begin{equation}
\label{eq:kv cache}
\mathbf{A}^{t,l}_{[\mathcal{M}^{t-1}_{\beta}]} = \text{softmax}\left(\frac{\mathbf{Q}^{t,l}_{[ \mathcal{M}^{t-1}_\beta]}(\mathbf{\tilde K}^{t,l}_{[\mathcal{I}]})^\top}{\sqrt{d_k}}\right)\mathbf{\tilde V}^{t,l}_{[\mathcal{I}]}, \quad \text{update cache: } \left\{ \begin{matrix}
    \mathbf{\tilde K}^{t,l}_{[\mathcal{M}^{t-1}_\beta]} &= \mathbf{K}^{t,l}_{[\mathcal{M}^{t-1}_\beta]} \\
\mathbf{\tilde V}^{t,l}_{[\mathcal{M}^{t-1}_\beta]} &= \mathbf{V}^{t,l}_{[\mathcal{M}^{t-1}_\beta]}
\end{matrix} \right..
\end{equation}
While sliding window decoding shares similarities with Fast-dLLM’s block-wise KV caching~\citep{wu2025fast} (see Fig.~\ref{fig:overview}), it offers key improvements. By predicting nearby tokens together, it reduces cache loss for distant MASK tokens. In contrast, block-wise decoding may miss MASK tokens near block ends, resulting in less efficient predictions due to overly aggressive context caching.

\begin{algorithm*}[t]
\caption{The Elastic-Cache Algorithm}
\begin{algorithmic}[1]
\State \textbf{Require:} Prompt $\mathbf{x}_{\text{prompt}}$, Sliding window size $\beta$, Update threshold $\gamma$, Generation length $N$.
\State \textbf{Initialize:} $\mathbf{x}^0 \leftarrow \{\mathbf{x}_{\text{prompt}}; \texttt{[MASK]}, \dots, \texttt{[MASK]}\}$; \quad $P \leftarrow \texttt{length}(\mathbf{x}_{\text{prompt}})$;
\State $t \leftarrow 1$; \quad $\mathcal{D}^{1} \leftarrow \{1, \dots,P\}$; \quad $\mathcal{M}^1 \leftarrow \{P+1,\dots,P+N\}$; \quad $\mathcal{T}^0 \leftarrow \emptyset $;
\While{$\mathcal{M}^t \ne \emptyset$}
    \State $\mathcal{M}^t_\beta \leftarrow \mathcal{M}^t_{[:\beta]}$; \quad $\mathcal{Q}^t \gets \mathcal{T}^{t-1} \cup \mathcal{M}^{t-1}_\beta$; \quad $\mathbf{H}^{t,0}_{[\mathcal{Q}^t]} \leftarrow \texttt{Embedding}(\mathbf{x}^{t}_{[\mathcal{Q}^t]})$; \quad $l^* \gets \infty$
    \For{$l = 1, \dots, L$}
        \If{$l > l^*$} \hfill \textcolor{updategreen}{\textit{// Cache Update}}
            \State $\mathbf{\tilde H}^{t,l}_{[\mathcal{I}]}, \mathbf{\tilde K}^{t,l}_{[\mathcal{I}]}, \mathbf{\tilde V}^{t,l}_{[\mathcal{I}]} \gets\texttt{cache}(\mathcal{I})$; \quad $\mathbf{Q}^{t,l}_{[\mathcal{I}]}, \mathbf{K}^{t,l}_{[\mathcal{I}]}, \mathbf{V}^{t,l}_{[\mathcal{I}]} = \texttt{FFN}(\mathbf{ H}^{t,l}_{[\mathcal{I}]})$ 
            \State $\mathbf{ H}^{t,l+1}_{[\mathcal{I}]}, \mathbf{S}^{t,l}_{[\mathcal{T}^{t-1}]} \gets \texttt{MHA}(\mathbf{Q}^{t,l}_{[\mathcal{I}]}, \mathbf{K}^{t,l}_{[\mathcal{I}]}, \mathbf{V}^{t,l}_{[\mathcal{I}]})$
        \Else \hfill \textcolor{updategreen}{\textit{// Cache Reuse}}
            \State $\mathbf{\tilde H}^{t,l}_{[\mathcal{Q}^t]}, \mathbf{\tilde K}^{t,l}_{[\mathcal{Q}^t]}, \mathbf{\tilde V}^{t,l}_{[\mathcal{Q}^t]} \gets\texttt{cache}(\mathcal{Q}^t)$; \quad $\mathbf{Q}^{t,l}_{[\mathcal{Q}^t]}, \mathbf{K}^{t,l}_{[\mathcal{Q}^t]}, \mathbf{V}^{t,l}_{[\mathcal{Q}^t]} = \texttt{FFN}(\mathbf{ H}^{t,l}_{[\mathcal{Q}^t]})$ 
            \State $\mathbf{ H}^{t,l+1}_{[\mathcal{Q}^t]}, \mathbf{S}^{t,l}_{[\mathcal{T}^{t-1}]} \gets \texttt{MHA}(\mathbf{Q}^{t,l}_{[\mathcal{Q}^t]}, \mathbf{\tilde K}^{t,l}_{[\mathcal{I}]}, \mathbf{\tilde V}^{t,l}_{[\mathcal{I}]})$
            \State $\sigma^{t,l} \gets \texttt{cosine\_similarity}(\mathbf{S}^{t-1,l}_{[\mathcal{T}^{t-1}]}, \mathbf{S}^{t,l}_{[\mathcal{T}^{t-1}]})$
            \If{$\sigma^{t,k} < \gamma$} \hfill \textcolor{updategreen}{\textit{// Start update cache from layer $l+1$}}
                \State $l^* \gets l$; \quad $ \mathbf{H}^{t,l+1}_{[\mathcal{I}]}\gets \texttt{get\_cached\_state}( \mathbf{H}^{t,l+1}_{[\mathcal{Q}^t]})$
            \EndIf
        \EndIf
        \State Get the most-attended token: $\mathcal{T}^{t,l} \gets \arg\max_{k \in \mathcal{D}^{<t}} \sum_{q \in \mathcal{M}^{t}_\beta} \mathbf{S}^{t,l}_{[q, k]}$
    \EndFor

    \State Decode new tokens: $\mathbf{x}^{t+1},\mathcal{D}^{t+1} \leftarrow \texttt{decode}(\mathbf{x}^t, \mathcal{M}^t_\beta)$ 
    
    \State Update state: $\mathcal{M}^{t+1} \leftarrow \mathcal{M}^{t} \setminus \mathcal{D}^{t+1} $; \quad $\mathcal{T}^{t} = \bigcup_{l} \{\mathcal{T}^{t,l}\}_{l=1}^L$ \quad $t \gets t + 1$ \hfill \textcolor{updategreen}{\textit{// State Update}}
\EndWhile
\State \textbf{return} $\mathbf{x}^{t-1}$
\end{algorithmic}
\label{alg:elastic-cache}
\end{algorithm*}

\subsection{Attention-aware KV Cache Update}
The most important novelty of our proposed method is to automatically determine whether to update the KV cache to preserve accuracy while minimizing latency. Our method leverages the awareness of the model's attention weights to identify when the KV cache undergoes significant changes.
At time step $t$ and layer $l$, we determine the token that receives the most frequent attention from other tokens based on the attention weights corresponding to the current model’s prediction for $\mathcal{M}^{t}_{\beta}$. 
\vspace{-0.1in}
\begin{equation}
    \mathcal{T}^{t,l} = \arg\max_{k \in \mathcal{D}^{<t}} \sum_{q \in \mathcal{M}^{t}_\beta} \mathbf{S}^{t,l}_{[q, k]},
    \quad \quad \quad \text{where: } \mathbf{S}^{t,l}_{[\mathcal{M}^{t}_{\beta}]} = \text{softmax}\left(\frac{\mathbf{Q}^{t,l}_{[ \mathcal{M}^{t}_\beta]}(\mathbf{\tilde K}^{t,l}_{[\mathcal{I}]})^\top}{\sqrt{d_k}}\right).
\end{equation}
Here, we focus solely on the most-attended token among the current decoded tokens $\mathcal{D}^{<t}$. This is because the remaining MASK tokens either fall within the sliding window of predictions or have negligible influence on the unmasking tokens (Fig.~\ref{fig:motivation}a). We obtain one most-attended token per layer and compile the set of most-attended tokens, denoted as  $\mathcal{T}^{t} = \bigcup_{l} \{\mathcal{T}^{t,l}\}_{l=1}^L$. In practice, the most-attended token for a layer often overlaps with tokens from other layers, resulting in a relatively limited number of most-attended tokens being available at any given time.

$\mathcal{T}^t$, besides being the tokens that have the most influence on the predictions’ outcome, also signify the least changes among the cached decoded tokens (Fig. \ref{fig:motivation}d). Therefore, we use $\mathcal{T}^t$ as a lightweight trigger for our cache update mechanism. Without updating all cached tokens, we only frequently update the most-attended tokens $\mathcal{T}^t$ to measure the degree of changes for all other cached tokens. Ideally, since $\mathcal{T}^t$ have the least change among the decoded tokens, we expect that when $\mathcal{T}^t$ change significantly, the rest of the decoded tokens will also change significantly. Therefore, we add $\mathcal{T}^{t-1}$ to the sliding window at step $t$: $\mathcal{T}^{t-1} \cup \mathcal{M}^{t-1}_{\beta}$. We then measure the changes in attention weights of $\mathcal{T}^t$ between the current and previous steps, $t$ and $t-1$, using cosine similarity.
\begin{equation}
 l^* = \left\{ \begin{matrix}
     l & \text{if: }  \sigma^{t,l} < \gamma \\ 
     \infty & \text{othewise}
 \end{matrix} \right.,  \quad
 \text{Cosine Similarity:} \quad \sigma^{t,l} =  \frac{\|\mathbf{S}^{t-1,l}_{[\mathcal{T}^{t-1}]} \cdot \mathbf{S}^{t,l}_{[\mathcal{T}^{t-1}]}\|}{\|\mathbf{S}^{t-1,l}_{[\mathcal{T}^{t-1}]}\| \cdot \|\mathbf{S}^{t,l}_{[\mathcal{T}^{t-1}]}\|}.
\end{equation}
The changes in attention $\mathbf{S}^{t,l}$ directly affect the output of the current layer or the input of the next layer $\mathbf{H}^{t,l+1}$. This implies that our cached values are diverging from the actual values, necessitating an update. When a layer $l^*$ observes significant changes in attention weights $\sigma^{t,l} < \gamma$, we initiate the update of the KV cache for the subsequent layers, starting from $l^* + 1$ and continuing until the last layer $L$. To achieve this, we initialize the hidden states of all cached tokens with the states $\mathbf{\tilde H}^{t,l+1}_{[\mathcal{I}]}$, which have been saved and updated using patterns similar to $\mathbf{\tilde K}^{t,l+1}_{[\mathcal{I}]}$ and $\mathbf{\tilde V}^{t,l+1}_{[\mathcal{I}]}$.
\begin{equation}
    \text{Update state: } \mathbf{\tilde H}^{t,l+1}_{[\mathcal{M}^{t-1}_\beta]} = \mathbf{H}^{t,l+1}_{[\mathcal{M}^{t-1}_\beta]}, \quad \text{Initialize: } \mathbf{Q}^{t,l+1}_{[\mathcal{I}]}, \mathbf{K}^{t,l+1}_{[\mathcal{I}]}, \mathbf{V}^{t,l+1}_{[\mathcal{I}]} = \text{linear}(\mathbf{\tilde H}^{t,l+1}_{[\mathcal{I}]}) 
\end{equation}
We then update and overwrite the KV cache using the same process as initially at $t=0$, as described in Eq.~\ref{eq:update cache}. If none of the layers satisfy $\sigma^{t,l} < \gamma$, we continue to reuse our KV cache for future predictions.

We didn’t directly compare the hidden state $\mathbf{H}^{t,l+1}$ and $\mathbf{H}^{t-1,l+1}$ because their changes depend on various network components. The error in measurement could be amplified by the divergence between the cached value and the actual value (including Key-Value states). 

On the other hand, the changes in attention weights are closely linked to the source of the change in Key-Value states, which is the bidirectional attention mechanism in diffusion LLMs. Intuitively, the changes in attention weights become significant when new decoded tokens receive high attention and alter the attention output computed in the past when they were still masked. Consequently, the changes in attention weights exhibit very similar patterns to the changes in Key-Value states during decoding, as illustrated in Fig.~\ref{fig:motivation}b and Fig.~\ref{fig:motivation}c.

Our approach is formally grounded in \textbf{Theorem \ref{thm:attention-drift}} (Appendix), which proves that the most-attended token $\mathcal{T}^{t,\ell}$ has KV drift bounded by $\Delta^{t,\ell}_{\mathcal{T}^{t,\ell}} \leq \bar{\Delta}^{t,\ell} + O\left(\frac{\sqrt{d_k}}{R_\ell\sqrt{N}}\right)$, where the error term scales negligibly with typical transformer dimensions. This establishes that monitoring attention patterns of most-attended tokens provides a computationally efficient and theoretically sound proxy for overall KV drift.

We use the hyper-parameter $\gamma$ to set the trigger for automatic cache updates. As shown in Fig.~\ref{fig:motivation}c, the attention weights' cosine similarity landscapes influence this. A higher $\gamma$ results in more frequent and extensive cache updates across multiple layers, while a lower $\gamma$ triggers updates less frequently. This flexibility allows us to effectively manage the trade-off between accuracy and latency. Our overall algorithm is described in Algorithm \ref{alg:elastic-cache}.

\vspace{-0.15in}
\section{Experiments}
\vspace{-0.1in}

\subsection{Experimental Setup}
\vspace{-0.05in}
\noindent{\bf Implementation Details.}
All our runs use a single NVIDIA A100 80GB. We evaluate \textbf{Elastic-Cache} on LLaDA-Instruct~\citep{nie2025large}, LLaDA-1.5~\citep{zhu2025llada15variancereducedpreference}, and multimodal LLaDA-V~\citep{you2025llada} across MBPP~\citep{austin2021program}, HumanEval~\citep{chen2021evaluating}, MATH~\citep{hendrycks2021measuring}, GSM8K~\citep{cobbe2021training}, MathVista~\citep{lu2023mathvista}, and MathVerse~\citep{zhang2024mathverse}. Default hyperparameters: attention threshold $\gamma$=0.9, parallel-decoding confidence $\epsilon$=0.9, cache block size 32. For fair comparison, we re-run LLaDA~\citep{nie2025large} and Fast-dLLM~\citep{wu2025fast} under the same hardware/software.
\noindent{\bf Evaluation Framework and Metrics.}
We use \texttt{lm-eval-harness}~\citep{eval-harness}. Throughput is tokens/sec averaged until emitting, matching Fast-dLLM's protocol~\citep{wu2025fast}. 
Accuracy metrics: GSM8K: 5-shot \texttt{flexible\_extract}~\citep{cobbe2021training}; MATH: 4-shot \texttt{math\_verify} (\texttt{minerva\_math})~\citep{hendrycks2021measuring}; HumanEval—0-shot with the Fast-dLLM post-processing~\citep{chen2021evaluating,wu2025fast}; MBPP—3-shot \texttt{pass@1}~\citep{austin2021program}. 
For LLaDA-V, we adopt the official pipeline with \texttt{lmms-eval}~\citep{zhang2024lmmsevalrealitycheckevaluation,lmms_eval2024}: MathVista: \texttt{gpt\_eval\_score}~\citep{lu2023mathvista}; MathVerse: \texttt{gpt\_eval\_score} on \texttt{mathverse\_testmini\_vision\_dominant}~\citep{zhang2024mathverse}.

\noindent{\bf Confidence-Aware Decoding.} We employ confidence-aware decoding strategies from Fast-dLLM~\citep{wu2025fast}, which select only tokens with confidence scores exceeding a specified threshold ($\epsilon$), instead of unmasking a fixed number of tokens per step, as in the baseline Diffusion LLM. This straightforward yet effective approach accelerates Diffusion LLM inference by enabling more tokens to be predicted concurrently at each iteration, contingent upon the model’s performance. Consequently, we concentrate on comparing the acceleration achieved by the KV caching method under the same decoding strategies.

\vspace{-0.05in}
\subsection{Performance and Efficiency Evaluation}

Across Tables~\ref{tab:lla-instruct}, \ref{tab:lla-1.5-instruct}, and \ref{tab:perf}, our proposed \textbf{Elastic-Cache} delivers substantial throughput gains for diffusion LLMs with minimal accuracy loss. By adaptively updating the cache only when necessary, it achieves a speedup of up to 45.1$\times$ over the standard baseline. While maintaining accuracy within 1$\sim$2\% on MATH and HumanEval, it also achieves higher accuracy on GSM8K and MBPP. Compared to Fast-dLLM~\citep{wu2025fast}, Elastic-Cache consistently attains greater tokens/sec at better accuracy. Elastic-Cache also generalizes to Dream-7B (Table~\ref{tab:dream}), achieving 21.4$\times$ and 5.5$\times$ speedups on GSM8K and HumanEval respectively.

As presented in Table~\ref{tab:lla-instruct}, on LLaDA-Instruct, \textbf{Elastic-Cache} reaches 90.1 t/s on GSM8K (512 tokens; 25.2$\times$ over baseline) at 77.71\% accuracy, surpassing Fast-dLLM's 44.0 t/s @ 74.83\%. On LLaDA-1.5 (Table~\ref{tab:lla-1.5-instruct}), our approach yields even greater gains, including 45.1$\times$ on  GSM8K with 512 Gen Length, with an accuracy of 81.35\% (baseline 81.35\%). This observation indicates that Elastic-Cache performs better when the model’s predictions are more accurate. The reason behind this could be the close relationship between our approach and attention scores. Intuitively, accurate predictions are associated with meaningful attention scores with fewer outliers, which makes our approach operate more smoothly.

We also observed that in most settings, Elastic-Cache provides higher throughput for longer generation lengths, whereas this is the opposite for Fast-dLLM~\citep{wu2025fast}, as it often experiences reduced throughput as the generation length increases. The advantages of our approach stem from the fixed-size sliding window and automatic cache update, which minimizes the dependency of throughput on the generation length.

In the multimodal setting (LLaDA-V; Table~\ref{tab:perf}), \textbf{Elastic-Cache} raises MathVerse-256 throughput to 32.3 t/s from Fast-dLLM's 30.3 t/s while maintaining 29.19\% accuracy, demonstrating robustness beyond text-only tasks. The significant improvement of Elastic-Cache over baselines across various settings suggests that our method is broadly applicable and has high scalability potential.

\begin{table}[t]
\centering
\caption{Comprehensive benchmark results on the LLaDA-Instruct suite. Each cell shows accuracy (top) and decoding throughput in tokens/sec with relative speedup to the LLaDA baseline (bottom, blue: t/s / orange: speedup). Highlighted cells denote the highest throughput and speedup per configuration. The highest accuracy is bolded.}
\label{tab:lla-instruct}
\vspace{-0.1in}
\resizebox{0.8\linewidth}{!}{ 
\begin{tabular}{lc|c|c:ccc}
\toprule
\multicolumn{1}{c}{} &
\multicolumn{1}{c|}{} &
\multicolumn{1}{c|}{} &
\multicolumn{3}{c}{\bfseries Confident-Aware Decoding}\\

\multicolumn{1}{c}{\bfseries Benchmark} &
\multicolumn{1}{c|}{\bfseries Gen Length} &
\multicolumn{1}{c|}{\bfseries LLaDA} &
\multicolumn{1}{c:}{\bfseries LLaDA} &
\multicolumn{1}{c}{\bfseries Fast-dLLM} &
\multicolumn{1}{c}{\bfseries Elastic-Cache}\\

\midrule
\multirow{3}{*}{GSM8K (5‑shot)} 
  & 256 
    & \makecell{ 78.01 \\ {\footnotesize \textcolor{blue}{7.3} (\textcolor{orange}{1.0\(\times\)})}} 
    & \makecell{ 78.62 \\ {\footnotesize \textcolor{blue}{22.8} (\textcolor{orange}{3.1\(\times\)})}}
    & \makecell{ 77.94 \\ {\footnotesize \textcolor{blue}{53.7} (\textcolor{orange}{7.7\(\times\)})}}
    &  \makecell{\bf 78.24 \\ \cellcolor{yellow!20}\makecell{\footnotesize \textcolor{blue}{58.0} (\textcolor{orange}{8.2\(\times\)})}} \\
  & 512 
    & \makecell{ 77.10 \\ {\footnotesize \textcolor{blue}{3.6} (\textcolor{orange}{1.0\(\times\)})}}
    & \makecell{ 77.33 \\ {\footnotesize \textcolor{blue}{18.6} (\textcolor{orange}{5.2\(\times\)})}}
    & \makecell{ 74.83 \\ {\footnotesize \textcolor{blue}{44.0} (\textcolor{orange}{12.3\(\times\)})}}
    & \makecell{\bf 77.71 \\ \cellcolor{yellow!20}\makecell{\footnotesize \textcolor{blue}{90.1} (\textcolor{orange}{25.2\(\times\)})}} \\
\midrule
\multirow{3}{*}{MATH (4‑shot)} 
  & 256 
    & \makecell{ 33.58 \\ {\footnotesize \textcolor{blue}{9.5} (\textcolor{orange}{1.0\(\times\)})}}
    & \makecell{ 33.28 \\ {\footnotesize \textcolor{blue}{25.8} (\textcolor{orange}{2.7\(\times\)})}}
    & \makecell{ 32.50 \\ {\footnotesize \textcolor{blue}{49.0} (\textcolor{orange}{5.1\(\times\)})}}
    & \makecell{ \textbf{33.14} \\ \cellcolor{yellow!20}\makecell{\footnotesize \textcolor{blue}{48.7} (\textcolor{orange}{5.1\(\times\)})}} \\
  & 512 
    & \makecell{ 37.20 \\ {\footnotesize \textcolor{blue}{7.1} (\textcolor{orange}{1.0\(\times\)})}}
    & \makecell{ 36.82 \\ {\footnotesize \textcolor{blue}{24.0} (\textcolor{orange}{3.4\(\times\)})}}
    & \makecell{ 35.70 \\ {\footnotesize \textcolor{blue}{52.8} (\textcolor{orange}{7.4\(\times\)})}}
    & \makecell{\textbf{36.60} \\ \cellcolor{yellow!20}\makecell{\footnotesize \textcolor{blue}{59.3} (\textcolor{orange}{7.9\(\times\)})}} \\
\midrule
\multirow{3}{*}{HumanEval (0‑shot)} 
  & 256 
    & \makecell{ 40.85 \\ {\footnotesize \textcolor{blue}{33.3} (\textcolor{orange}{1.0\(\times\)})}}
    & \makecell{ {42.07} \\ {\footnotesize \textcolor{blue}{102.1} (\textcolor{orange}{3.1\(\times\)})}}
    & \makecell{ 37.20 \\ {\footnotesize \textcolor{blue}{99.8} (\textcolor{orange}{3.0\(\times\)})}}
    & \makecell{\bf 40.24 \\ \cellcolor{yellow!20}\makecell{\footnotesize \textcolor{blue}{160.5} (\textcolor{orange}{4.8\(\times\)})}} \\
  & 512 
    & \makecell{ 43.90 \\ {\footnotesize \textcolor{blue}{17.7} (\textcolor{orange}{1.0\(\times\)})}}
    & \makecell{ 43.29 \\ {\footnotesize \textcolor{blue}{51.6} (\textcolor{orange}{2.9\(\times\)})}}
    & \makecell{ 45.73 \\ {\footnotesize \textcolor{blue}{76.1} (\textcolor{orange}{4.3\(\times\)})}}
    &\makecell{\textbf{46.34} \\ \cellcolor{yellow!20}\makecell{\footnotesize \textcolor{blue}{100.7} (\textcolor{orange}{5.0\(\times\)})}} \\
\midrule
\multirow{3}{*}{MBPP (3‑shot)} 
  & 256 
    & \makecell{ 29.80 \\ {\footnotesize \textcolor{blue}{6.5} (\textcolor{orange}{1.0\(\times\)})}}
    & \makecell{ 30.00 \\ {\footnotesize \textcolor{blue}{23.4} (\textcolor{orange}{3.6\(\times\)})}}
    & \makecell{ 25.40 \\ {\footnotesize \textcolor{blue}{45.1} (\textcolor{orange}{7.0\(\times\)})}}
    & \makecell{\textbf{32.2} \\ \cellcolor{yellow!20}\makecell{\footnotesize \textcolor{blue}{46.9} (\textcolor{orange}{7.3\(\times\)})}} \\
  & 512 
    & \makecell{ 15.0 \\ {\footnotesize \textcolor{blue}{4.7} (\textcolor{orange}{1.0\(\times\)})}}
    & \makecell{ 15.0 \\ {\footnotesize \textcolor{blue}{20.8} (\textcolor{orange}{4.4\(\times\)})}}
    & \makecell{ 13.6 \\ {\footnotesize \textcolor{blue}{44.7} (\textcolor{orange}{9.5\(\times\)})}}
    & \makecell{ \textbf{15.6} \\ \cellcolor{yellow!20}\makecell{\footnotesize \textcolor{blue}{63.0} (\textcolor{orange}{13.4\(\times\)})}} \\
\bottomrule
\end{tabular}
}
\vspace{-0.1in}
\end{table}

\begin{table}[t]
\centering
\caption{Comparison with additional KV caching methods on GSM8K (5-shot, 512 tokens) using LLaDA-1.5. Each cell shows accuracy (top) and throughput in t/s with relative speedup (bottom, blue: t/s; orange: speedup).}
\label{tab:additional-baseline}
\vspace{-0.1in}
\resizebox{0.95\linewidth}{!}{
\begin{tabular}{lccccc}
\toprule
\textbf{LLaDA-1.5} & \textbf{dKV-Cache} & \textbf{dLLM-Cache} & \textbf{DeepCache (N=10)} & \textbf{DeepCache (N=20)} & \textbf{Elastic-Cache} \\
\midrule
\makecell{81.35 \\ {\footnotesize \textcolor{blue}{2.6} (\textcolor{orange}{1.0$\times$})}} & 
\makecell{67.02 \\ {\footnotesize \textcolor{blue}{14.82} (\textcolor{orange}{5.7$\times$})}} & 
\makecell{80.97 \\ {\footnotesize \textcolor{blue}{16.84} (\textcolor{orange}{6.5$\times$})}} & 
\makecell{83.1 \\ {\footnotesize \textcolor{blue}{58.4} (\textcolor{orange}{22.5$\times$})}} & 
\makecell{81.4 \\ {\footnotesize \textcolor{blue}{60.9} (\textcolor{orange}{23.4$\times$})}} & 
\makecell{\textbf{83.7} \\ \cellcolor{yellow!20}{\footnotesize \textcolor{blue}{139.4} (\textcolor{orange}{53.6$\times$})}} \\
\bottomrule
\end{tabular}
}
\vspace{-0.1in}
\end{table}

\noindent{\bf Comparison with Additional KV Caching Methods.}
To further validate the effectiveness of Elastic-Cache, we compare against dLLM-Cache~\citep{ma2025dkv} and DeepCache~\citep{ma2024deepcache} on GSM8K (512 tokens) with LLaDA-1.5 in Table~\ref{tab:additional-baseline}. DeepCache uses fixed-interval cache updates with intervals $N=10$ and $N=20$. Our method achieves 139.4 t/s at 83.7\% acc, significantly outperforming dLLM-Cache (16.84 t/s, 80.97\%) and DeepCache variants (58.4-60.9 t/s, 81.4-83.1\%), demonstrating the advantages of adaptive, attention-aware cache over fixed-schedule approaches.

\begin{table}[t]
\centering
\caption{Comprehensive benchmark results on the LLaDA-1.5 suite. Each cell shows accuracy (top) and decoding throughput in tokens/sec with relative speedup to the LLaDA baseline (bottom, blue: t/s; orange: speedup). Bold cells denote the highest throughput and speedup per configuration.}
\label{tab:lla-1.5-instruct}
\vspace{-0.1in}
\resizebox{0.8\linewidth}{!}{ 
\begin{tabular}{lc|c|c:ccc}
\toprule
\multicolumn{2}{c|}{\bfseries} &
\multicolumn{1}{c|}{\bfseries} &
\multicolumn{4}{c}{\bfseries Confident-Aware Decoding}\\

\multicolumn{1}{c}{\bfseries Benchmark} &
\multicolumn{1}{c|}{\bfseries Gen Length} &
\multicolumn{1}{c|}{\bfseries LLaDA-1.5} &
\multicolumn{1}{c:}{\bfseries LLaDA-1.5} &
\multicolumn{1}{c}{\bfseries Fast-dLLM} &
\multicolumn{1}{c}{\bfseries Elastic-Cache}\\

\midrule
\multirow{3}{*}{GSM8K (5‑shot)} 
  & 256 
    & \makecell{ 80.36 \\ {\footnotesize \textcolor{blue}{6.7} (\textcolor{orange}{1.0\(\times\)})}} 
    & \makecell{ 80.44 \\ {\footnotesize \textcolor{blue}{22.5} (\textcolor{orange}{3.3\(\times\)})}}
    & \makecell{ 80.59 \\ {\footnotesize \textcolor{blue}{51.2} (\textcolor{orange}{7.6\(\times\)})}}
    & \makecell{\bf 81.50 \\ \cellcolor{yellow!20}\makecell{\footnotesize \textcolor{blue}{58.0} (\textcolor{orange}{8.7\(\times\)})}} \\
  & 512 
    & \makecell{ 81.35 \\ {\footnotesize \textcolor{blue}{2.6} (\textcolor{orange}{1.0\(\times\)})}}
    & \makecell{ 81.88 \\ {\footnotesize \textcolor{blue}{17.2} (\textcolor{orange}{6.6\(\times\)})}}
    & \makecell{ 80.82 \\ {\footnotesize \textcolor{blue}{36.8} (\textcolor{orange}{14.1\(\times\)})}}
    & \makecell{ \textbf{81.35} \\ \cellcolor{yellow!20}\makecell{\footnotesize \textcolor{blue}{117.2} (\textcolor{orange}{45.1\(\times\)})}} \\
\midrule

\multirow{3}{*}{MATH (4‑shot)} 
  & 256 
    & \makecell{ 33.52\\ {\footnotesize \textcolor{blue}{8.5} (\textcolor{orange}{1.0\(\times\)})}} 
    & \makecell{ 33.60\\ {\footnotesize \textcolor{blue}{22.3} (\textcolor{orange}{2.6\(\times\)})}} 
    & \makecell{ 32.74\\ {\footnotesize \textcolor{blue}{44.4} (\textcolor{orange}{5.2\(\times\)})}} 
    & \makecell{ \textbf{33.50} \\ \cellcolor{yellow!20}\makecell{\footnotesize \textcolor{blue}{51.0} (\textcolor{orange}{6.5\(\times\)})}} \\
  & 512 
    & \makecell{ 35.63\\ {\footnotesize \textcolor{blue}{5.0} (\textcolor{orange}{1.0\(\times\)})}} 
    & \makecell{ 35.56\\ {\footnotesize \textcolor{blue}{20.3} (\textcolor{orange}{4.0\(\times\)})}} 
    & \makecell{ 33.68\\ {\footnotesize \textcolor{blue}{44.4} (\textcolor{orange}{8.8\(\times\)})}} 
    & \makecell{ \bf {35.36} \\ \cellcolor{yellow!20}\makecell{\footnotesize \textcolor{blue}{74.8} (\textcolor{orange}{14.9\(\times\)})}} \\
\midrule

\multirow{3}{*}{HuamnEval (0‑shot)} 
  & 256 
    & \makecell{ 43.29 \\ {\footnotesize \textcolor{blue}{7.0} (\textcolor{orange}{1.0\(\times\)})}}
    & \makecell{ 42.68 \\ {\footnotesize \textcolor{blue}{17.5} (\textcolor{orange}{2.5\(\times\)})}}
    & \makecell{ 34.75 \\ {\footnotesize \textcolor{blue}{18.7} (\textcolor{orange}{2.7\(\times\)})}}
    &  \makecell{\bf 36.59 \\ \cellcolor{yellow!20}\makecell{\footnotesize \textcolor{blue}{20.9} (\textcolor{orange}{3.0\(\times\)})}} \\
  & 512 
    & \makecell{40.85  \\ {\footnotesize \textcolor{blue}{3.2} (\textcolor{orange}{1.0\(\times\)})}}
    & \makecell{ 39.63 \\ {\footnotesize \textcolor{blue}{9.7} (\textcolor{orange}{3.1\(\times\)})}}
    & \makecell{ 36.59 \\ {\footnotesize \textcolor{blue}{15.4} (\textcolor{orange}{4.8\(\times\)})}}
    &  \makecell{ \bf 37.80 \\ \cellcolor{yellow!20}\makecell{\footnotesize \textcolor{blue}{16.8} (\textcolor{orange}{5.3\(\times\)})}} \\
\midrule

\multirow{3}{*}{MBPP (3‑shot)} 
  & 256 
    & \makecell{ 38.00 \\ {\footnotesize \textcolor{blue}{2.4} (\textcolor{orange}{1.0\(\times\)})}}
    & \makecell{ 38.00 \\ {\footnotesize \textcolor{blue}{14.2} (\textcolor{orange}{5.8\(\times\)})}}
    & \makecell{ 34.60 \\ {\footnotesize \textcolor{blue}{28.0} (\textcolor{orange}{11.6\(\times\)})}}
    &  \makecell{\bf 41.20 \\ \cellcolor{yellow!20}\makecell{\footnotesize \textcolor{blue}{32.7} (\textcolor{orange}{13.5\(\times\)})}} \\
  & 512 
    & \makecell{ 38.20 \\ {\footnotesize \textcolor{blue}{1.0} (\textcolor{orange}{1.0\(\times\)})}}
    & \makecell{ 38.60 \\ {\footnotesize \textcolor{blue}{11.5} (\textcolor{orange}{11.5\(\times\)})}}
    & \makecell{ 36.20 \\ {\footnotesize \textcolor{blue}{17.8} (\textcolor{orange}{17.8\(\times\)})}}
   &  \makecell{ \bf{39.00} \\ \cellcolor{yellow!20}\makecell{\footnotesize \textcolor{blue}{32.8} (\textcolor{orange}{32.8\(\times\)})}} \\
\bottomrule
\end{tabular}
}
\vspace{-0.1in}
\end{table}

\begin{table}[t]
\centering
\caption{Comprehensive benchmark results on the ``Dream-v0-Base-7B" suite. Each cell shows accuracy (top) and decoding throughput in tokens/sec with relative speedup to the Dream baseline (bottom, blue: t/s; orange: speedup). }
\label{tab:dream}
\vspace{-0.1in}
\resizebox{0.7\linewidth}{!}{ 
\begin{tabular}{cc|c|c:ccc}
\toprule
\multicolumn{2}{c|}{\bfseries} &
\multicolumn{1}{c|}{\bfseries} &
\multicolumn{4}{c}{\bfseries Confident-Aware Decoding}\\

\multicolumn{1}{c}{\bfseries Benchmark} &
\multicolumn{1}{c|}{\bfseries Gen Length} &
\multicolumn{1}{c|}{\bfseries Dream} &
\multicolumn{1}{c}{\bfseries Fast-dLLM} &
\multicolumn{1}{c}{\bfseries Elastic-Cache}\\

\midrule
{GSM8K (5‑shot)} 
  & 512 
    & \makecell{ 76.0 \\ {\footnotesize \textcolor{blue}{7.9} (\textcolor{orange}{1.0\(\times\)})}} 
    
    & \makecell{ 74.1 \\ {\footnotesize \textcolor{blue}{45.9} (\textcolor{orange}{5.8\(\times\)})}}
    & \makecell{\bf 75.6 \\ \cellcolor{yellow!20}\makecell{\footnotesize \textcolor{blue}{169.4} (\textcolor{orange}{21.4\(\times\)})}} \\

\midrule
{HumanEval (0‑shot)} 
  & 512 
    & \makecell{ 54.3 \\ {\footnotesize \textcolor{blue}{17.2} (\textcolor{orange}{1.0\(\times\)})}}
    
    & \makecell{ 51.2 \\ {\footnotesize \textcolor{blue}{50.1} (\textcolor{orange}{2.9\(\times\)})}}
    & \makecell{\bf 56.7 \\ \cellcolor{yellow!20}\makecell{\footnotesize \textcolor{blue}{95.2} (\textcolor{orange}{5.5\(\times\)})}} \\

\bottomrule
\end{tabular}
}
\vspace{0.1in}

\centering
\caption{Performance and Speedup Comparison of LLaDA-V on MathVista and MathVerse. Each benchmark presents results from LLaDA-V (base) using  Fast-dLLM, and our method.}
\label{tab:perf}
\vspace{-0.1in}
\resizebox{0.999\linewidth}{!}{ 
\begin{tabular}{@{} c |c c c  |c c c  @{}}
\toprule
\multirow{2}{*}{\bfseries Length} &
  \multicolumn{3}{c|}{\bfseries MathVista} &
  \multicolumn{3}{c}{\bfseries MathVerse} \\
\cmidrule(lr){2-4}\cmidrule(l){5-7}
 & 
Base Model (LLaDA-V)& \textbf{Fast-dLLM} & \multicolumn{1}{c}{\bfseries Elastic-Cache (Ours)} & 
Base Model (LLaDA-V)& \textbf{Fast-dLLM} & \multicolumn{1}{c}{\bfseries Elastic-Cache (Ours)}  \\
\midrule
256 &  
54.6 ({\footnotesize \textcolor{blue}{2.3}})&\makecell{ 55.9 ({\footnotesize \textcolor{blue}{28.7}})} 
& \makecell{ 55.9 ({\footnotesize \textcolor{blue}{29.7}})} 
& 
30.1 ({\footnotesize \textcolor{blue}{2.1}})&\makecell{26.8 ({\footnotesize \textcolor{blue}{30.3}})} 
& \makecell{ 29.2 ({\footnotesize \textcolor{blue}{32.3}})} 
 \\
512 & 
53.0 ({\footnotesize \textcolor{blue}{1.9}}) &\makecell{ 54.1 ({\footnotesize \textcolor{blue}{23.7}})} 
& \makecell{ 55.8 ({\footnotesize \textcolor{blue}{24.1}})} 
& 
26.9 ({\footnotesize \textcolor{blue}{2.0}})&\makecell{25.5 ({\footnotesize \textcolor{blue}{28.1}})} 
& \makecell{ 29.2 ({\footnotesize \textcolor{blue}{30.8}})} 
\\
\bottomrule
\end{tabular}
}
\vspace{-0.1in}

\end{table}

\vspace{-0.05in}
\subsection{Ablations}
\vspace{-0.05in}

We ablate key choices: 1) Cache update threshold $\gamma$, 2) sliding window size $\beta$, and 3) prefill and generation length, to expose speed/accuracy trade-offs and justify defaults.

\noindent{\bf Cache Update Threshold ($\gamma$).} Table~\ref{tab:lla-instruct-ablation-threshold} illustrates the sensitivity of our proposed method to the parameter $\gamma$. As $\gamma$ is used to control the frequency of cache updates, a consistent decrease in $\gamma$ leads to an increase in throughput. However, there is also a trend of decreasing accuracy as throughput increases. The trend is more consistent for the LLaDA-1.5 model, while for LLaDA, the accuracy at peak ($\gamma=0.9$) is higher, but the throughput is lower.

\noindent{\bf Sliding Window Size ($\beta$).} 
Fig.~\ref{fig:window-vs-block} shows that our accuracy is stable across various $\beta$ and close to No-Cache until $\beta\!\approx$64; beyond that LLaDA's tendency to emit EOS early degrades results~\citep{you2025llada}. Throughput, however, is sensitive to $\beta$: larger windows enable more parallel prediction (fewer iterations, lower latency), but overly large $\beta$ reduces cacheable MASK tokens, raising per-step compute and latency.

\noindent{\bf Sliding Window vs. Block-Wise.} When switching Elastic-Cache to block-wise decoding (Fast-dLLM-style) (Fig.~\ref{fig:window-vs-block}), our accuracy is often similar to No-Cache, but short blocks hurt accuracy and throughput diverges. Our sliding window groups nearby MASK tokens that strongly attend to each other, whereas block-wise caching over-aggressively freezes distant MASKs, harming small-block predictions. Our Elastic-Cache's automatic cache refresh detects divergent tokens and updates them, preserving accuracy at the cost of some throughput.

\noindent{\bf Prefill and Generation Length.} Table~\ref{tab:lla-instruct-ablation-few-shot} and Table~\ref{tab:lla-instruct-ablation-gen-len} provide insights into the impact of prefill length and generation length on the overall speedup. Notably, both Fast-dLLM and Elastic-Cache experience a decrease in throughput as the prefill length increases from 3-shot to 8-shot. However, Elastic-Cache exhibits a remarkable speedup and consistently high accuracy across different prefill lengths. Moreover, the throughput of Elastic-Cache increases with generation length, highlighting its unique scaling properties.

\begin{figure}[t]
  \centering
  \begin{subfigure}[b]{0.352\linewidth}
    \centering
    \includegraphics[width=1\linewidth]{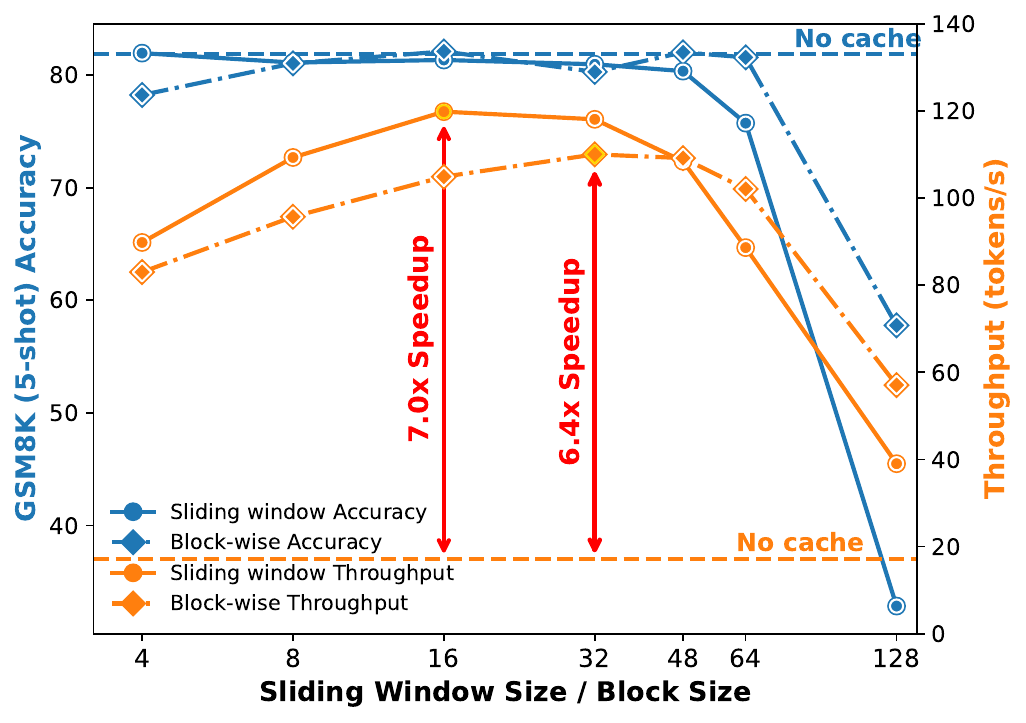}
    \vspace{-0.2in}
    \caption{Sliding window and block size}
    \label{fig:window-vs-block}
  \end{subfigure}
  \hfill
  \begin{subfigure}[b]{0.3\linewidth}
    \centering
    \includegraphics[width=1\linewidth]{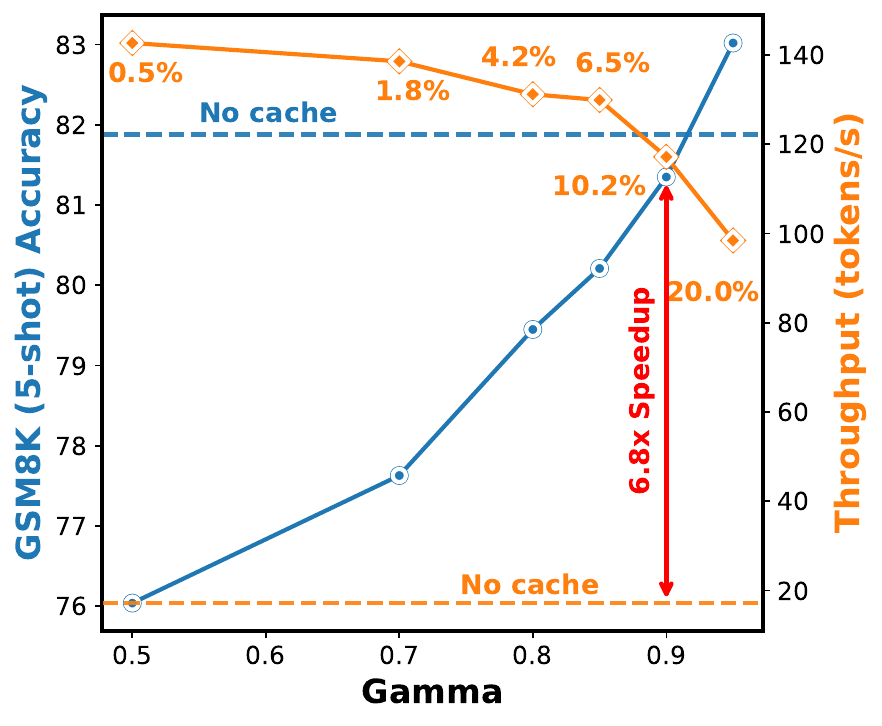}
    \vspace{-0.2in}
    \caption{Cache update frequency.}
    \label{fig:update-freq}
  \end{subfigure}
  \hfill
  \begin{subfigure}[b]{0.3\linewidth}
    \centering
    \includegraphics[width=1\linewidth]{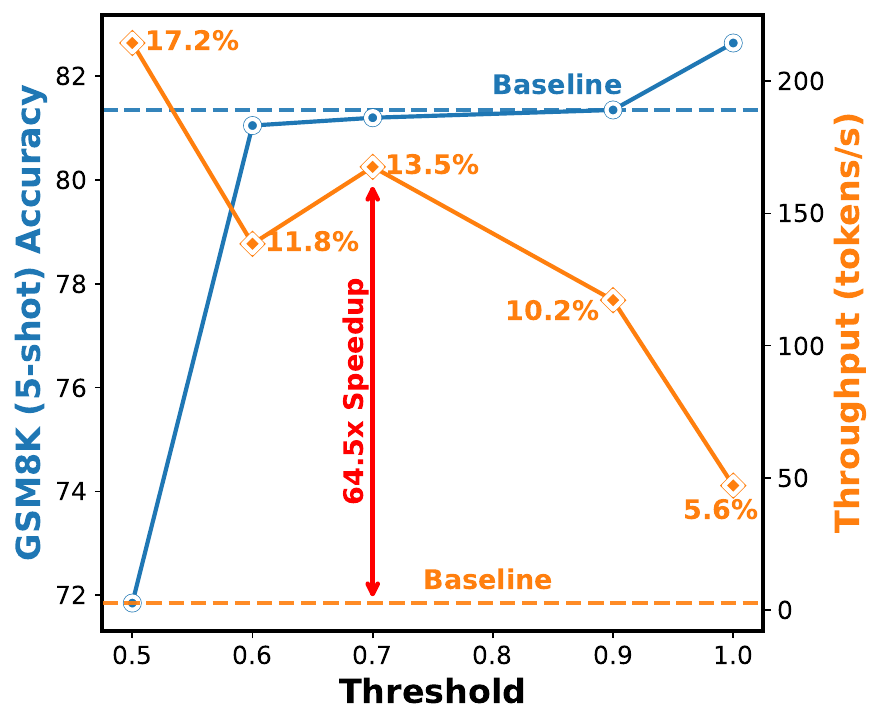}
    \vspace{-0.2in}
    \caption{Confident-aware decoding.}
    \label{fig:decoding-threshold}
  \end{subfigure}
  \vspace{-0.08in}
  \caption{Ablation study and analysis of our proposed method. (a) Ablation study of our sliding window mechanism compared to block-wise decoding. (b) Analysis of cache update frequency under varying $\gamma$. The blue and orange lines represent accuracy and throughput, respectively. The numbers along the lines indicate the frequency of cache updates, assuming no baseline. (c) Analysis of cache update frequency under confident-aware decoding with varying $\epsilon$.}
  \vspace{-0.1in}
\end{figure}


\begin{table}[t]
\centering
\caption{Performance under different denoising schedules (LLaDA-1.5, GSM8K). }
\label{tab:denoising-schedule}
\vspace{-0.1in}
\resizebox{0.9\linewidth}{!}{
\begin{tabular}{l|c|c|c|c|c}
\toprule
\textbf{Method} & \textbf{1 tok/step} & \textbf{2 tok/step} & \textbf{4 tok/step} & \textbf{3.25 tok/step ($\epsilon$=0.9)} & \textbf{5.12 tok/step ($\epsilon$=0.7)} \\
\midrule
Baseline & 
\makecell{81.4 \\ {\footnotesize \textcolor{blue}{2.6} (\textcolor{orange}{1.0\(\times\)})}} & 
\makecell{79.8 \\ {\footnotesize \textcolor{blue}{5.1} (\textcolor{orange}{1.0\(\times\)})}} & 
\makecell{67.5 \\ {\footnotesize \textcolor{blue}{10.3} (\textcolor{orange}{1.0\(\times\)})}} & 
\makecell{81.9 \\ {\footnotesize \textcolor{blue}{17.2} (\textcolor{orange}{1.0\(\times\)})}} & 
\makecell{79.6 \\ {\footnotesize \textcolor{blue}{26.6} (\textcolor{orange}{1.0\(\times\)})}} \\
\midrule
Fast-dLLM & 
\makecell{80.5 \\ {\footnotesize \textcolor{blue}{8.5} (\textcolor{orange}{3.3\(\times\)})}} & 
\makecell{77.3 \\ {\footnotesize \textcolor{blue}{15.2} (\textcolor{orange}{3.0\(\times\)})}} & 
\makecell{64.7 \\ {\footnotesize \textcolor{blue}{27.3} (\textcolor{orange}{2.7\(\times\)})}} & 
\makecell{80.8 \\ {\footnotesize \textcolor{blue}{36.8} (\textcolor{orange}{2.1\(\times\)})}} & 
\makecell{80.0 \\ {\footnotesize \textcolor{blue}{46.9} (\textcolor{orange}{1.8\(\times\)})}} \\
\midrule
Elastic-Cache & 
\makecell{\textbf{82.6} \\ {\footnotesize \textcolor{blue}{47.0} (\textcolor{orange}{18.1\(\times\)})}} & 
\makecell{\textbf{78.1} \\ {\footnotesize \textcolor{blue}{86.5} (\textcolor{orange}{17.0\(\times\)})}} & 
\makecell{\textbf{69.9} \\ {\footnotesize \textcolor{blue}{149.8} (\textcolor{orange}{14.5\(\times\)})}} & 
\makecell{\textbf{81.4} \\ {\footnotesize \textcolor{blue}{117.2} (\textcolor{orange}{6.8\(\times\)})}} & 
\makecell{\textbf{81.2} \\ {\footnotesize \textcolor{blue}{167.6} (\textcolor{orange}{6.3\(\times\)})}} \\
\bottomrule
\end{tabular}
}
\end{table}

\begin{table}[t]
\centering
\caption{Impact of attention threshold on accuracy and speedup under GSM8K (5-Shot) for LLaDA and LLaDA1.5 with generation length of 512.}
\label{tab:lla-instruct-ablation-threshold}
\vspace{-0.1in}
\resizebox{\textwidth}{!}{%
\begin{tabular}{l|c|c|cccccc}
\toprule
\multicolumn{3}{c|}{} &
\multicolumn{6}{c}{\bfseries Elastic-Cache (Ours)}\\

\multicolumn{1}{c|}{\bfseries Model} &
\multicolumn{1}{c|}{\bfseries No Cache} &
\multicolumn{1}{c|}{\bfseries Fast-dLLM} &
\multicolumn{1}{c}{\bfseries $\gamma=0.5$} &
\multicolumn{1}{c}{\bfseries $\gamma=0.7$} &
\multicolumn{1}{c}{\bfseries $\gamma=0.8$} &
\multicolumn{1}{c}{\bfseries $\gamma=0.85$} &
\multicolumn{1}{c}{\bfseries $\gamma=0.9$} &
\multicolumn{1}{c}{\bfseries $\gamma=0.95$}\\

\midrule
LLaDA\makecell{ \\ {}}
    & \makecell{77.10 \\ {\footnotesize \textcolor{blue}{3.6} (\textcolor{orange}{1.0\(\times\)})}} 
    & \makecell{74.83 \\ {\footnotesize \textcolor{blue}{44.0} (\textcolor{orange}{12.2\(\times\)})}} 
    & \makecell{71.57 \\ {\footnotesize \textcolor{blue}{109.9} (\textcolor{orange}{30.5\(\times\)})}}
    & \makecell{73.46 \\ {\footnotesize \textcolor{blue}{108.7} (\textcolor{orange}{30.2\(\times\)})}}
    & \makecell{74.30 \\ {\footnotesize \textcolor{blue}{103.9} (\textcolor{orange}{28.9\(\times\)})}} 
    & \makecell{74.68 \\ {\footnotesize \textcolor{blue}{99.1} (\textcolor{orange}{27.5\(\times\)})}} 
    & \makecell{77.71 \\ {\footnotesize \textcolor{blue}{91.5} (\textcolor{orange}{25.4\(\times\)})}}
    & \makecell{76.72 \\ {\footnotesize \textcolor{blue}{75.5} (\textcolor{orange}{21.0\(\times\)})}} \\

LLaDA-1.5\makecell{ \\ {}}
    & \makecell{81.35 \\ {\footnotesize \textcolor{blue}{2.6} (\textcolor{orange}{1.0\(\times\)})}}
    & \makecell{80.82 \\ {\footnotesize \textcolor{blue}{36.8} (\textcolor{orange}{14.2\(\times\)})}}
    & \makecell{76.04 \\ {\footnotesize \textcolor{blue}{142.7} (\textcolor{orange}{54.9\(\times\)})}}
    & \makecell{77.63 \\ {\footnotesize \textcolor{blue}{138.6} (\textcolor{orange}{53.3\(\times\)})}}
    & \makecell{79.45 \\ {\footnotesize \textcolor{blue}{131.2} (\textcolor{orange}{50.5\(\times\)})}}
    & \makecell{80.21 \\ {\footnotesize \textcolor{blue}{129.9} (\textcolor{orange}{50.0\(\times\)})}}
    & \makecell{81.35 \\ {\footnotesize \textcolor{blue}{117.2} (\textcolor{orange}{45.1\(\times\)})}}
    & \makecell{83.02 \\ {\footnotesize \textcolor{blue}{98.4} (\textcolor{orange}{37.8\(\times\)})}} \\

\bottomrule
\end{tabular}
}

\vspace{0.1in}
  \caption{Comparison between Elastic-Cache and Fast-dLLM when varying Prefill and Gen. Length.}
  \label{tab:pair}
  \vspace{-0.1in}
  \begin{subtable}[t]{0.48\textwidth}
    \centering
\caption{Impact of few-shots on Accuracy and Speedup Under GSM8K (1024) for LLaDA.}
\label{tab:lla-instruct-ablation-few-shot}
\resizebox{0.98\textwidth}{!}{%
\begin{tabular}{l|ccc}
\toprule
\multicolumn{1}{c|}{\bfseries Model} &
\multicolumn{1}{c}{\bfseries 3-shot} &
\multicolumn{1}{c}{\bfseries 5-shot} &
\multicolumn{1}{c}{\bfseries 8-shot} \\

\midrule

    Fast-dLLM & \makecell{ 73.77 \\ {\footnotesize \textcolor{blue}{28.5} (\textcolor{orange}{1.0\(\times\)})}}
    & \makecell{ 76.04 \\ {\footnotesize \textcolor{blue}{25.0} (\textcolor{orange}{1.0\(\times\)})}}
    & \makecell{ 75.36 \\ {\footnotesize \textcolor{blue}{20.8} (\textcolor{orange}{1.0\(\times\)})}} \\

    Elastic-Cache 
    & \makecell{ 75.13 \\ {\footnotesize \textcolor{blue}{185.3} (\textcolor{orange}{6.5\(\times\)})}} 
    & \makecell{ 75.21 \\ {\footnotesize \textcolor{blue}{169.8} (\textcolor{orange}{6.8\(\times\)})}} 
    & \makecell{ 75.28 \\ {\footnotesize \textcolor{blue}{143.9} (\textcolor{orange}{6.9\(\times\)})}} \\
    
\bottomrule
\end{tabular}
}
  \end{subtable}
  \hspace{0.1in}
  \begin{subtable}[t]{0.48\textwidth}
    \centering
\caption{Impact of generation length on Accuracy and Speedup Under GSM8K (5-Shot) for LLaDA.}
\label{tab:lla-instruct-ablation-gen-len}
\resizebox{0.94\textwidth}{!}{%
\begin{tabular}{l|ccc}
\toprule
\multicolumn{1}{c|}{\bfseries Model} &
\multicolumn{1}{c}{\bfseries 256} &
\multicolumn{1}{c}{\bfseries 512} &
\multicolumn{1}{c}{\bfseries 1024} \\

\midrule

    Fast-dLLM & \makecell{ 77.94 \\ {\footnotesize \textcolor{blue}{53.7} (\textcolor{orange}{1.0\(\times\)})}}
    & \makecell{ 74.83 \\ {\footnotesize \textcolor{blue}{44.0} (\textcolor{orange}{1.0\(\times\)}) }}
    & \makecell{ 76.04 \\ {\footnotesize \textcolor{blue}{25.0} (\textcolor{orange}{1.0\(\times\)}) }} \\

    Elastic-Cache 
    & \makecell{ 78.24 \\ {\footnotesize \textcolor{blue}{58.0} (\textcolor{orange}{1.1\(\times\)}) }}
    & \makecell{ 77.71 \\ {\footnotesize \textcolor{blue}{91.5} (\textcolor{orange}{2.1\(\times\)}) }} 
    & \makecell{ 75.21 \\ {\footnotesize \textcolor{blue}{169.8} (\textcolor{orange}{6.8\(\times\)}) }} 
     \\
    
\bottomrule
\end{tabular}
}
  \end{subtable}
  \vspace{-0.1in}
\end{table}

\vspace{-0.05in}
\subsection{Analysis}

\noindent{\bf Cache update frequency.} 
Fig.~\ref{fig:update-freq} and Fig.~\ref{fig:decoding-threshold} illustrate the frequency of cache updates performed by Elastic-Cache under varying hyper-parameters $\gamma$ and $\epsilon$. The proposed method maintains a very low cache update frequency across different values of $\gamma$ (Fig.~\ref{fig:update-freq}). In extreme cases, with $\gamma=0.95$, the cache update frequency increases to only 20\% compared to the baseline without a cache. Moreover, increasing the model’s confidence and accuracy (with $\epsilon$, Fig.~\ref{fig:decoding-threshold}) enhances Elastic-Cache’s effectiveness, and reduces the cache update frequency.

\noindent{\bf Tunable Speed–Accuracy Trade-off.}
The cache update threshold $\gamma$ directly determines the balance (Table~\ref{tab:lla-instruct-ablation-threshold}). An excessively high $\gamma$ could lead to unnecessary cache updates, resulting in a decrease in speedup without any improvement in accuracy. Conversely, a smaller $\gamma$ value could guarantee speedup while sacrificing accuracy. The optimal value for $\gamma$ to maximize both accuracy and throughput depends on the model’s prediction. Models with higher accuracy tend to have the best $\gamma$ value, which is closer to $1.0$ (Table~\ref{tab:lla-instruct-ablation-threshold}).

\noindent{\bf Scaling Properties.}
Elastic-Cache scales greatly with the generation length and the power of the base model. Increasing the generation length slows down the baseline performance but speeds up Elastic-Cache (Tables~\ref{tab:lla-instruct-ablation-gen-len}). Moreover, Elastic-Cache effectiveness is highly dependent on the accuracy of the model’s predictions (Table~\ref{tab:lla-instruct}, Table~\ref{tab:lla-1.5-instruct}, Fig.~\ref{fig:decoding-threshold}). This indicates that Elastic-Cache can effectively scale with the size of the model and the size of the training data, as LLMs generally improve when they scale up.

\noindent{\bf Robustness Across Denoising Schedules.}
We test Elastic-Cache on LLaDA-1.5, GSM8K under varying denoising schedules by controlling average tokens decoded per step (Table~\ref{tab:denoising-schedule}). While the baseline decodes 1 token/step, confidence-aware decoding~\citep{wu2025fast} increases this to 3.25 ($\epsilon=0.9$) and 5.12 ($\epsilon=0.7$). As decoding becomes more aggressive, KV drift grows, demanding more frequent cache updates. Elastic-Cache adapts by raising update frequency from 5.6\% to 17.2\%, preserving accuracy, unlike Fast-dLLM, which suffers under fixed schedules. This highlights Elastic-Cache's robustness to denoising variations without manual tuning.

\vspace{-0.05in}
\section{Related Work}
\vspace{-0.05in}

\noindent{\bf Diffusion Language Models.}
Classical diffusion models excel in continuous domains: images~\citep{ho2020denoising,dhariwal2021diffusion,rombach2022highresolutionimagesynthesislatent}, audio~\citep{yang2023diffsounddiscretediffusionmodel,huang2023makeanaudiotexttoaudiogenerationpromptenhanced}, and video~\citep{xing2024survey,ho2022imagen,ho2022video}, building on the seminal formulation of~\citet{sohl2015deep}. Adapting diffusion to discrete text has followed Markov/multinomial/continuous-time paths~\citep{austin2021structured,hoogeboom2021argmax,hoogeboom2021autoregressive,campbell2022continuous,sun2022score}, refined via score matching, ratio methods, and reparameterization~\citep{meng2022concrete,lou2023reflected,Zheng2023ARD}, with recent work unifying these views~\citep{sahoo2024simple,shi2024simplified,ou2024your,zheng2024masked}. Early NLP systems validated these ideas~\citep{he2022diffusionbert,li2022diffusion,gong2022diffuseq} and explored semi-autoregression~\citep{han2022ssd}. Masked diffusion approaching autoregressive quality~\citep{sahoo2024simple} enabled scalable models (LLaDA) competitive with LLaMA~\citep{nie2025large,nie2024scaling,touvron2023llama,dubey2024llama}, with further gains from AR adaptation and instruction tuning~\citep{gong2024scaling,zhu2025llada15variancereducedpreference,dream2025}. The paradigm now spans multimodal/structured domains~\citep{you2025llada,yang2025mmada,yu2025dimplediscretediffusionmultimodal,wang2024diffusion,wang2024dplm,kitouni2023disk}.
\\
\noindent{\bf Acceleration Techniques for Large Language Models.}
KV caching underpins efficient transformer inference~\citep{vaswani2017attention,pope2023efficiently}, complemented by GQA, RoPE, and modern LLM optimizations~\citep{ainslie2023gqa,su2024roformer,touvron2023llama,touvron2023llama2,dubey2024llama}. Diffusion LLMs complicate caching due to bidirectional attention and evolving representations; dedicated methods include Fast-dLLM~\citep{wu2025fast}, dKV-Cache~\citep{ma2025dkv}, and DeepCache~\citep{ma2024deepcache}. Orthogonal accelerations exploit parallel/non-AR generation~\citep{gu2017non,xiao2023surveynonautoregressivegenerationneural}, block-wise diffusion~\citep{arriola2025blockdiffusioninterpolatingautoregressive}, fast sampling~\citep{chen2023fast}, test-time scaling~\citep{ramesh2025test}, and consistency models~\citep{kou2024cllms}. However, most rely on temporal heuristics or fixed thresholds, leaving attention patterns underused.
\\
{\bf Our Perspective and Insights over Prior Methods.}
We close this gap with attention-aware and layer-aware caching for diffusion LLMs: tracking most-attended tokens and depth-varying KV dynamics to guide recomputation, complementary to interval-based~\citep{ma2025dkv} and confidence-based~\citep{wu2025fast} policies and compatible with the broader acceleration toolkit~\citep{ainslie2023gqa,su2024roformer,touvron2023llama,touvron2023llama2,dubey2024llama,gu2017non,xiao2023surveynonautoregressivegenerationneural,arriola2025blockdiffusioninterpolatingautoregressive,chen2023fast,ramesh2025test,kou2024cllms}.

\vspace{-0.08in}
\section{Conclusion}
\vspace{-0.05in}

We presented {\bf Elastic-Cache}, a training-free, architecture-agnostic policy that makes KV caching in diffusion LLMs adaptive along two axes: {\em when} to refresh (via an attention-aware drift test) and {\em where} to refresh (via a depth-selective update starting at a learned boundary layer). By block-caching distant MASK tokens, reusing shallow-layer caches, and refreshing only when the most-attended token indicates meaningful state change, Elastic-Cache removes large amounts of redundant QKV work. Across decoding steps, this yields substantial latency reductions with negligible impact on generation quality, addressing a key deployment bottleneck for diffusion decoders. Looking ahead, we plan to refine drift thresholds with learned predictors, formalize guarantees linking attention patterns to KV drift, and explore interplay with speculative decoding or other hardware-aware scheduling, extending the same principles to autoregressive LLMs and multimodal diffusion frameworks.

\section*{Ethics Statement}
This work targets inference-time efficiency for diffusion LLMs and does not introduce new data collection or model training. All evaluations use publicly available datasets and third-party checkpoints under their original licenses, no personally identifiable information is processed. While faster decoding can lower the cost of generation and thus broaden access, it may also amplify misuse. We neither change safety filters nor attempt to bypass alignment constraints of the underlying models. We will document evaluation prompts and tasks, follow the usage policies of model providers, and encourage human oversight for downstream deployments, especially in high-stakes applications.

\section*{Reproducibility Statement}
Elastic-Cache is training-free and defined by a small set of inference hyperparameters: the attention similarity threshold $\gamma$, block size and generation length. We will release code, configs, and scripts to reproduce all results: (i) reference implementations of Attention-Aware and Layer-Aware KV updates with ablation; (ii) exact prompts/datasets, metrics, and other criteria; and (iii) environment specs (CUDA/driver, framework versions) and hardware details (GPU type, batch sizes). We report wall-clock latency and accuracy metrics for each setting, and provide logs to our tables/figures from raw traces.

\bibliography{iclr2026_conference}
\bibliographystyle{iclr2026_conference}
\clearpage
\appendix
\section*{\Large{Appendix}}

\etocdepthtag.toc{mtappendix}
\etocsettagdepth{mtchapter}{none}
\etocsettagdepth{mtappendix}{subsection}
\tableofcontents
\newpage

\section{Theoretical Validation for Elastic-Cache}

\subsection{Notation and Setup}

\begin{itemize}
\item $L$: number of transformer layers, indexed by $\ell \in \{1,\ldots,L\}$
\item $T$: total denoising steps, indexed by $t \in \{0,\ldots,T\}$
\item $N$: sequence length
\item $d$: hidden dimension; $d_k$, $d_v$: key and value dimensions
\item $\mathbf{H}^{t,\ell}_i \in \R^{d}$: hidden state of token $i$ at step $t$, layer $\ell$
\item $\mathbf{K}^{t,\ell}_i, \mathbf{V}^{t,\ell}_i \in \R^{d_k}, \R^{d_v}$: key and value of token $i$
\item $\mathbf{S}^{t,\ell} \in \R^{N \times N}$: attention weights at step $t$, layer $\ell$
\item $\mathcal{D}^{<t}$: decoded token positions up to step $t-1$
\item $\mathcal{M}^t$: masked token positions at step $t$
\item $\mathcal{M}^t_\beta$: sliding window of size $\beta$ over masked positions

\item $\alpha^{t,\ell}_k := \sum_{q \in \mathcal{M}^t_\beta} \mathbf{S}^{t,\ell}_{q,k}$ : total attention token $k$ receives
\item $\Delta\mathbf{H}_i := \mathbf{H}^{t,\ell}_i - \mathbf{H}^{t-1,\ell}_i$ : change in hidden state

\item $\bar{\Delta}^{t,\ell} := \frac{1}{N}\sum_{i=1}^N \|\Delta\mathbf{H}^{t,\ell}_i\|_2$ : average hidden state drift

\item $\Delta_{\max} := \max_{i} \|\Delta\mathbf{H}_i\|_2$ : maximum hidden state change

\item $\Gamma^{t,\ell} := \alpha^{t,\ell}_{\mathcal{T}^{t,\ell}} - \max_{k \neq \mathcal{T}^{t,\ell}} \alpha^{t,\ell}_k \geq 0$ : Attention Gap

\end{itemize}

\subsection{Background Lemmas and Assumptions}

\begin{lemma}[Lipschitz Continuity of Softmax]
\label{lem:softmax-lip}
Based on the Proposition 2 in~\cite{gao2017properties}, the softmax function $\sigma: \R^n \to \Delta^{n-1}$ defined by
\begin{equation}
\sigma(\mathbf{z})_i = \frac{\exp(z_i)}{\sum_{j=1}^n \exp(z_j)}
\end{equation}
is $1$-Lipschitz continuous with respect to the $\ell_2$ norm:
\begin{equation}
\norm{\sigma(\mathbf{z}) - \sigma(\mathbf{z}')}_2 \leq \norm{\mathbf{z} - \mathbf{z}'}_2, \quad \forall \mathbf{z}, \mathbf{z}' \in \R^n
\end{equation}
\end{lemma}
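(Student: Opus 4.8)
The plan is to bound the Lipschitz constant of $\sigma$ by the supremum of the operator norm of its Jacobian over the (convex) domain $\R^n$, and then conclude via the mean-value inequality along line segments. First I would compute the Jacobian directly: differentiating $\sigma(\mathbf{z})_i = \exp(z_i)/\sum_j \exp(z_j)$ gives $J(\mathbf{z}) := D\sigma(\mathbf{z}) = \mathrm{diag}(\sigma(\mathbf{z})) - \sigma(\mathbf{z})\,\sigma(\mathbf{z})^\top$, where $\mathrm{diag}(\mathbf{p})$ is the diagonal matrix with entries $p_i$. The key structural observation is that $J(\mathbf{z})$ is precisely the covariance matrix of the categorical random vector that equals the standard basis vector $\mathbf{e}_i$ with probability $\sigma(\mathbf{z})_i$; hence $J(\mathbf{z})$ is symmetric and positive semidefinite, $J(\mathbf{z}) \succeq 0$, for every $\mathbf{z} \in \R^n$.

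Next I would bound its spectral norm. Since $J(\mathbf{z}) \succeq 0$, all its eigenvalues are nonnegative, so $\norm{J(\mathbf{z})}_2 = \lambda_{\max}(J(\mathbf{z})) \le \mathrm{tr}(J(\mathbf{z})) = \sum_{i}\big(\sigma(\mathbf{z})_i - \sigma(\mathbf{z})_i^2\big) = 1 - \norm{\sigma(\mathbf{z})}_2^2 \le 1$. Then, for arbitrary $\mathbf{z},\mathbf{z}' \in \R^n$, the fundamental theorem of calculus along the segment joining them gives $\sigma(\mathbf{z}) - \sigma(\mathbf{z}') = \int_0^1 J\big(\mathbf{z}' + s(\mathbf{z}-\mathbf{z}')\big)(\mathbf{z}-\mathbf{z}')\,\mathrm{d}s$, which is valid because $\sigma$ is smooth and the segment stays in the domain; taking $\ell_2$ norms and using the spectral bound yields $\norm{\sigma(\mathbf{z}) - \sigma(\mathbf{z}')}_2 \le \Big(\sup_{s\in[0,1]}\norm{J(\mathbf{z}'+s(\mathbf{z}-\mathbf{z}'))}_2\Big)\norm{\mathbf{z}-\mathbf{z}'}_2 \le \norm{\mathbf{z}-\mathbf{z}'}_2$, which is the claim. (Alternatively one may simply invoke Proposition~2 of \cite{gao2017properties}, as the lemma statement indicates.)

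There is no genuine difficulty here since the result is classical, but the one step to get right is the spectral-norm estimate: one should resist bounding $\lambda_{\max}(J(\mathbf{z}))$ by the largest diagonal entry $\max_i \sigma(\mathbf{z})_i(1-\sigma(\mathbf{z})_i) \le \tfrac{1}{4}$, since that inequality fails for general non-diagonal matrices; the trace bound is the clean route and succeeds precisely because $J(\mathbf{z})$ is PSD. It is also worth remarking that the bound $1 - \norm{\sigma(\mathbf{z})}_2^2$ is in fact strictly below $1$, so $\sigma$ is a strict contraction; however the lemma only asserts the uniform constant $1$, which is all that the downstream attention-drift argument needs.
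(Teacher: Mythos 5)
Your proof is correct. Note, however, that the paper itself does not prove this lemma at all: it simply imports the result by citing Proposition~2 of Gao and Pavel (2017), so there is no in-paper argument to compare against. What you supply is the standard self-contained derivation, and it is sound at every step: the Jacobian is indeed $J(\mathbf{z}) = \mathrm{diag}(\sigma(\mathbf{z})) - \sigma(\mathbf{z})\sigma(\mathbf{z})^{\top}$, its identification as the covariance of a one-hot categorical vector gives $J(\mathbf{z}) \succeq 0$, and the bound $\norm{J(\mathbf{z})}_2 = \lambda_{\max}(J(\mathbf{z})) \leq \mathrm{tr}(J(\mathbf{z})) = 1 - \norm{\sigma(\mathbf{z})}_2^2 \leq 1$ combined with the integral form of the mean-value inequality along the segment yields the claim. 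Your caution about not bounding $\lambda_{\max}$ by the largest diagonal entry is well taken, and your observation that the map is in fact a strict contraction (the constant $1$ is not tight; sharper constants such as $1/2$ for the Jacobian norm are known) is a correct aside that the downstream drift bounds in Theorems~\ref{thm:drift-monotonicity} and~\ref{thm:attention-drift} do not require. In short: the proposal proves more than the paper bothers to, and does so correctly.
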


\begin{assumption}[Bounded Representations]\label{ass:bounded}
At each layer $\ell$ and step $t$: $\norm{\mathbf{H}^{t,\ell}_i}_2 \leq R_\ell$
\end{assumption}

\begin{assumption}[Lipschitz Network Components]\label{ass:lipschitz}
The projection matrices satisfy $\norm{\mathbf{W}_Q^{\ell}}_2, \norm{\mathbf{W}_K^{\ell}}_2, \norm{\mathbf{W}_V^{\ell}}_2 \leq W_{\max}$. The feedforward network at layer $\ell$ is $L_{\text{FFN}}$-Lipschitz continuous.
\end{assumption}

\begin{assumption}[Progressive Unmasking]\label{ass:unmasking}
At each step $t$, a non-empty subset $\mathcal{D}^t \subseteq \mathcal{M}^{t-1}$ is unmasked: $|\mathcal{D}^{<t}|$ increases and $\mathcal{M}^t = \mathcal{M}^{t-1} \setminus \mathcal{D}^t$.
\end{assumption}

\begin{assumption}[Layer-Wise Representation Dynamics]\label{ass:specialization}
There exists $\ell^* \in \{1,\ldots,L\}$ and functions $f_{\ell}(t) \to 0$ as $t \to T$ for $\ell \leq \ell^*$ such that:
\begin{itemize}
\item \textbf{Shallow layers} ($\ell \leq \ell^*$): The expected hidden state change for decoded tokens vanishes: 
For $\ell \leq \ell^*$: $$\E[\norm{\mathbf{H}^{t,\ell}_i - \mathbf{H}^{t-1,\ell}_i}_2 \mid i \in \mathcal{D}^{<t}] \leq f_{\ell}(t) \to 0$$

\item \textbf{Deep layers} ($\ell > \ell^*$): The expected change remains bounded away from zero:
$$\liminf_{t \to T} \E[\norm{\mathbf{H}^{t,\ell}_i - \mathbf{H}^{t-1,\ell}_i}_2 \mid i \in \mathcal{D}^{<t}] \geq c_{\ell} > 0$$
\end{itemize}
This reflects that early layers encode local lexical patterns that stabilize quickly, while deep layers encode semantic relationships that continue evolving~\citep{kovaleva2019revealing,jawahar2019does,rogers2021primer}. Our experiments validate this (Fig.~\ref{fig:motivation}b).
\end{assumption}

\begin{assumption}[Attention Concentration]\label{attn_concentration}
The attention gap is a non-negligible fraction of total attention mass:
\begin{equation}
\Gamma^{t,\ell} \geq c \cdot |\mathcal{M}^t_\beta|
\end{equation}
for some constant $c > 0$ independent of $N, t, \ell$.
\end{assumption}

\begin{definition}[KV Drift]
The KV drift at layer $\ell$, step $t$ for token $i$ is:
\begin{equation}
\Delta^{t,\ell}_i := \norm{\mathbf{K}^{t,\ell}_i - \mathbf{K}^{t-1,\ell}_i}_2 + \norm{\mathbf{V}^{t,\ell}_i - \mathbf{V}^{t-1,\ell}_i}_2
\end{equation}
Average drift over decoded tokens: $\Delta^{t,\ell} := \frac{1}{|\mathcal{D}^{<t}|} \sum_{i \in \mathcal{D}^{<t}} \Delta^{t,\ell}_i$
\end{definition}

\subsection{Layer-Wise KV Drift Monotonicity}

This theorem formalizes the observation that KV drift increases with layer depth, providing theoretical justification for our layer-aware cache refresh strategy that selectively recomputes deeper layers while reusing shallow-layer caches. Fig.~\ref{fig:motivation}a empirically validates this monotonicity property.

\begin{theorem}[Layer-Wise KV Drift Monotonicity]\label{thm:drift-monotonicity}
Under Assumptions~\ref{ass:bounded}--\ref{ass:specialization}, there exists a transition layer $\ell^* \in \{1,\ldots,L\}$ such that for sufficiently large $t$ (when most tokens are decoded):
\begin{equation}
\E_t[\Delta^{t,\ell}] \leq \E_t[\Delta^{t,\ell'}], \quad \forall \ell \leq \ell^* < \ell' \leq L
\end{equation}
\end{theorem}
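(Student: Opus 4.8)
The plan is to reduce the layer-wise comparison of \emph{KV} drift to the layer-wise behavior of the \emph{hidden-state} drift $\norm{\Delta\mathbf{H}^{t,\ell}_i}_2$, which Assumption~\ref{ass:specialization} already controls in both regimes, and then to transfer that control to keys and values through the Lipschitz structure of the projection maps. First I would record the elementary pointwise upper bound: since $\mathbf{K}^{t,\ell}_i$ and $\mathbf{V}^{t,\ell}_i$ are obtained from $\mathbf{H}^{t,\ell}_i$ by applying the key/value projections (composed with the pre-attention normalization, which is Lipschitz on the bounded domain of Assumption~\ref{ass:bounded}), Assumption~\ref{ass:lipschitz} gives
\begin{equation}
\Delta^{t,\ell}_i \;\le\; \big(\norm{\mathbf{W}_K^{\ell}}_2 + \norm{\mathbf{W}_V^{\ell}}_2\big)\,\norm{\Delta\mathbf{H}^{t,\ell}_i}_2 \;\le\; 2W_{\max}\,\norm{\Delta\mathbf{H}^{t,\ell}_i}_2 .
\end{equation}
Averaging over $i\in\mathcal{D}^{<t}$ (which is non-empty and growing by Assumption~\ref{ass:unmasking}) and taking $\E_t$, the shallow clause of Assumption~\ref{ass:specialization} yields $\E_t[\Delta^{t,\ell}] \le 2W_{\max}\,f_\ell(t)\to 0$ as $t\to T$ for every $\ell\le\ell^*$.

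For the deep layers I would need the reverse direction, namely that a genuine change of the hidden state forces a genuine change of the stacked key/value vector. The assumptions as stated are one-sided (only upper bounds on $\norm{\mathbf{W}_K^{\ell}}_2,\norm{\mathbf{W}_V^{\ell}}_2$), so I would add a mild non-degeneracy condition for $\ell>\ell^*$: the smallest singular value of the stacked projection $[\mathbf{W}_K^{\ell};\mathbf{W}_V^{\ell}]$ is at least $w_{\min}>0$ (generic for trained transformers with $d_k+d_v\ge d$, where this map is injective). Using $a+b\ge\sqrt{a^2+b^2}$ for $a,b\ge0$, this gives
\begin{equation}
\Delta^{t,\ell}_i \;\ge\; \sqrt{\norm{\mathbf{W}_K^{\ell}\Delta\mathbf{H}^{t,\ell}_i}_2^2 + \norm{\mathbf{W}_V^{\ell}\Delta\mathbf{H}^{t,\ell}_i}_2^2} \;=\; \norm{[\mathbf{W}_K^{\ell};\mathbf{W}_V^{\ell}]\,\Delta\mathbf{H}^{t,\ell}_i}_2 \;\ge\; w_{\min}\,\norm{\Delta\mathbf{H}^{t,\ell}_i}_2 .
\end{equation}
Averaging over $\mathcal{D}^{<t}$ and invoking the deep clause of Assumption~\ref{ass:specialization} then gives $\liminf_{t\to T}\E_t[\Delta^{t,\ell'}] \ge w_{\min}\,c_{\ell'} > 0$ for every $\ell'>\ell^*$.

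Finally I would combine the two estimates over the finitely many layers. Set $c_{\min} := w_{\min}\min_{\ell'>\ell^*} c_{\ell'} > 0$ and $\bar f(t) := \max_{\ell\le\ell^*} f_\ell(t)\to 0$. By the $\liminf$ bound and finiteness of the set of deep layers, there is $T_0<T$ with $\E_t[\Delta^{t,\ell'}] > w_{\min}c_{\ell'}/2 \ge c_{\min}/2$ for all $\ell'>\ell^*$ and all $t\ge T_0$; enlarging $T_0$ if needed so that $2W_{\max}\bar f(t) < c_{\min}/2$ as well, we obtain for all $t\ge T_0$ and all pairs $\ell\le\ell^*<\ell'\le L$
\begin{equation}
\E_t[\Delta^{t,\ell}] \;\le\; 2W_{\max} f_\ell(t) \;\le\; 2W_{\max}\bar f(t) \;<\; \frac{c_{\min}}{2} \;<\; \E_t[\Delta^{t,\ell'}],
\end{equation}
with $\ell^*$ the transition layer from Assumption~\ref{ass:specialization}, which is exactly the claim.

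I expect the main obstacle to be the deep-layer lower bound: monotonicity is \emph{not} a formal consequence of Assumptions~\ref{ass:bounded}--\ref{ass:specialization} alone, since nothing there prevents $\Delta\mathbf{H}^{t,\ell'}$ from lying in the joint null space of $\mathbf{W}_K^{\ell'}$ and $\mathbf{W}_V^{\ell'}$; the injectivity/non-degeneracy hypothesis is the crux that makes the argument go through and should be stated explicitly. A secondary care point is the precise meaning of $\E_t[\cdot]$ and the interchange of the empirical average over $\mathcal{D}^{<t}$ with the conditional expectation appearing in Assumption~\ref{ass:specialization}; I would make these coincide by defining $\E_t[\Delta^{t,\ell}] := \E\!\left[\Delta^{t,\ell}_i \mid i\in\mathcal{D}^{<t}\right]$ with $i$ uniform over $\mathcal{D}^{<t}$, so that linearity of expectation moves freely through the averages used above.
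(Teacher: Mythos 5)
Your proposal is correct in outline and follows the same core reduction as the paper---bounding $\Delta^{t,\ell}_i \le 2W_{\max}\norm{\Delta\mathbf{H}^{t,\ell}_i}_2$ via Assumption~\ref{ass:lipschitz} and then invoking the shallow/deep dichotomy of Assumption~\ref{ass:specialization}---but it diverges in two instructive ways. First, you skip the paper's Steps 2--5, a layer recursion establishing $\bar{\Delta}^{t,\ell+1}\le\lambda_\ell\bar{\Delta}^{t,\ell}$ with $\lambda_\ell>1$; that recursion is motivational rather than load-bearing, since the final comparison rests entirely on Assumption~\ref{ass:specialization}, so omitting it loses nothing. Second, and more importantly, you correctly identify that the deep-layer direction requires a \emph{lower} bound on KV drift in terms of hidden-state drift, which the one-sided Lipschitz hypotheses do not supply. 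The paper's own proof stumbles exactly here: having established only the upper bound $\E[\Delta^{t,\ell}]\le 2W_{\max}\bar{\Delta}^{t,\ell}$, it then asserts $\E[\Delta^{t,\ell'}]\ge 2W_{\max}c_{\ell'}$ for deep layers, reusing the upper-bound constant in a lower bound---an inequality that does not follow from anything derived (nothing in Assumptions~\ref{ass:bounded}--\ref{ass:specialization} prevents $\Delta\mathbf{H}^{t,\ell'}_i$ from lying in the common kernel of $\mathbf{W}_K^{\ell'}$ and $\mathbf{W}_V^{\ell'}$, exactly as you observe). Your added non-degeneracy hypothesis $\sigma_{\min}([\mathbf{W}_K^{\ell};\mathbf{W}_V^{\ell}])\ge w_{\min}>0$ is the right repair and should be stated as an explicit assumption; with it, your finite-layer uniformization via $c_{\min}$ and $\bar f(t)$ closes the argument cleanly. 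Your secondary care point is also well taken: the paper silently conflates $\bar{\Delta}^{t,\ell}$, an average over \emph{all} tokens, with the conditional expectation over $i\in\mathcal{D}^{<t}$ appearing in Assumption~\ref{ass:specialization}, and your explicit convention $\E_t[\Delta^{t,\ell}] := \E[\Delta^{t,\ell}_i \mid i\in\mathcal{D}^{<t}]$ resolves that ambiguity.
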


\begin{proof}

\textbf{Step 1: Relating KV Drift to Hidden State Drift.}

The key-value projections at layer $\ell$ are:
\begin{align}
\mathbf{K}^{t,\ell}_i &= W_K^{\ell} \mathbf{H}^{t,\ell}_i\\
\mathbf{V}^{t,\ell}_i &= W_V^{\ell} \mathbf{H}^{t,\ell}_i
\end{align}

By the triangle inequality and Assumption~\ref{ass:lipschitz} ($\|W_K^{\ell}\|_2, \|W_V^{\ell}\|_2 \leq W_{\max}$):
\begin{align}
\|\mathbf{K}^{t,\ell}_i - \mathbf{K}^{t-1,\ell}_i\|_2 &= \|W_K^{\ell}(\mathbf{H}^{t,\ell}_i - \mathbf{H}^{t-1,\ell}_i)\|_2 \notag\\
&\leq \|W_K^{\ell}\|_2 \|\mathbf{H}^{t,\ell}_i - \mathbf{H}^{t-1,\ell}_i\|_2 \notag\\
&\leq W_{\max} \|\Delta\mathbf{H}^{t,\ell}_i\|_2
\end{align}

Similarly for values:
\begin{equation}
\|\mathbf{V}^{t,\ell}_i - \mathbf{V}^{t-1,\ell}_i\|_2 \leq W_{\max} \|\Delta\mathbf{H}^{t,\ell}_i\|_2
\end{equation}

Therefore:
\begin{equation}\label{eq:kv-drift-bound-detailed}
\Delta^{t,\ell}_i = \|\mathbf{K}^{t,\ell}_i - \mathbf{K}^{t-1,\ell}_i\|_2 + \|\mathbf{V}^{t,\ell}_i - \mathbf{V}^{t-1,\ell}_i\|_2 \leq 2W_{\max} \|\Delta\mathbf{H}^{t,\ell}_i\|_2
\end{equation}

\textbf{Step 2: Layer Recursion for Hidden States.}

At layer $\ell$, the transformer block computes:
\begin{equation}
\mathbf{H}^{t,\ell+1}_i = \mathbf{H}^{t,\ell}_i + \text{Attn}^{\ell}(\mathbf{Q}^{t,\ell}_i, \mathbf{K}^{t,\ell}, \mathbf{V}^{t,\ell}) + \text{FFN}^{\ell}(\mathbf{H}^{t,\ell}_i + \text{Attn}^{\ell}(\cdot))
\end{equation}

where the attention output is:
\begin{equation}
\text{Attn}^{\ell}(\mathbf{Q}^{t,\ell}_i, \mathbf{K}^{t,\ell}, \mathbf{V}^{t,\ell}) = \sum_{j=1}^N \mathbf{S}^{t,\ell}_{i,j} \mathbf{V}^{t,\ell}_j
\end{equation}

The change in hidden state at layer $\ell+1$ satisfies:
\begin{align}
\|\Delta\mathbf{H}^{t,\ell+1}_i\|_2 &= \|\mathbf{H}^{t,\ell+1}_i - \mathbf{H}^{t-1,\ell+1}_i\|_2\notag\\
&\leq \|\Delta\mathbf{H}^{t,\ell}_i\|_2 + \|\text{Attn}^{\ell}(t) - \text{Attn}^{\ell}(t-1)\|_2 \notag\\
&\quad + \|\text{FFN}^{\ell}(\text{input}^t) - \text{FFN}^{\ell}(\text{input}^{t-1})\|_2
\end{align}

By Assumption~\ref{ass:lipschitz}, the FFN is $L_{\text{FFN}}$-Lipschitz:
\begin{equation}
\|\text{FFN}^{\ell}(\text{input}^t) - \text{FFN}^{\ell}(\text{input}^{t-1})\|_2 \leq L_{\text{FFN}} \|\text{input}^t - \text{input}^{t-1}\|_2
\end{equation}

The FFN input is $\mathbf{H}^{t,\ell}_i + \text{Attn}^{\ell}(\cdot)$, so:
\begin{align}
\|\text{input}^t - \text{input}^{t-1}\|_2 &\leq \|\Delta\mathbf{H}^{t,\ell}_i\|_2 + \|\text{Attn}^{\ell}(t) - \text{Attn}^{\ell}(t-1)\|_2
\end{align}

Therefore:
\begin{equation}\label{eq:hidden-recursion}
\|\Delta\mathbf{H}^{t,\ell+1}_i\|_2 \leq (1 + L_{\text{FFN}})\|\Delta\mathbf{H}^{t,\ell}_i\|_2 + (1 + L_{\text{FFN}})\|\text{Attn}^{\ell}(t) - \text{Attn}^{\ell}(t-1)\|_2
\end{equation}

\textbf{Step 3: Bounding Attention Output Change.}

Denote $\Delta_{\text{attn}}^{t,\ell,i} := \|\text{Attn}^{\ell}(t) - \text{Attn}^{\ell}(t-1)\|_2$. We decompose:
\begin{align}
&\sum_{j=1}^N \mathbf{S}^{t,\ell}_{i,j} \mathbf{V}^{t,\ell}_j - \sum_{j=1}^N \mathbf{S}^{t-1,\ell}_{i,j} \mathbf{V}^{t-1,\ell}_j \notag\\
&= \sum_{j=1}^N \mathbf{S}^{t,\ell}_{i,j} (\mathbf{V}^{t,\ell}_j - \mathbf{V}^{t-1,\ell}_j) + \sum_{j=1}^N (\mathbf{S}^{t,\ell}_{i,j} - \mathbf{S}^{t-1,\ell}_{i,j}) \mathbf{V}^{t-1,\ell}_j
\end{align}

Taking norms and applying triangle inequality:
\begin{align}
\Delta_{\text{attn}}^{t,\ell,i} &\leq \sum_{j=1}^N \mathbf{S}^{t,\ell}_{i,j} \|\mathbf{V}^{t,\ell}_j - \mathbf{V}^{t-1,\ell}_j\|_2 + \sum_{j=1}^N |\mathbf{S}^{t,\ell}_{i,j} - \mathbf{S}^{t-1,\ell}_{i,j}| \|\mathbf{V}^{t-1,\ell}_j\|_2
\end{align}

\textit{Step 3a: First term (value changes).}
Since $\sum_j \mathbf{S}^{t,\ell}_{i,j} = 1$ (attention weights sum to 1):
\begin{align}
\sum_{j=1}^N \mathbf{S}^{t,\ell}_{i,j} \|\mathbf{V}^{t,\ell}_j - \mathbf{V}^{t-1,\ell}_j\|_2 &\leq \sum_{j=1}^N \mathbf{S}^{t,\ell}_{i,j} W_{\max} \|\Delta\mathbf{H}^{t,\ell}_j\|_2 \quad \text{(by Assumption~\ref{ass:lipschitz})}\notag\\
&= W_{\max} \E_{j \sim \mathbf{S}^{t,\ell}_{i,:}}[\|\Delta\mathbf{H}^{t,\ell}_j\|_2]\notag\\
&\leq W_{\max} \bar{\Delta}^{t,\ell}
\end{align}

\textit{Step 3b: Second term (attention weight changes).}
By Cauchy-Schwarz: $\sum_j |a_j| b_j \leq (\sum_j |a_j|) \max_j b_j$

By Assumption~\ref{ass:bounded}: $\|\mathbf{V}^{t-1,\ell}_j\|_2 \leq W_{\max} R_\ell$

Therefore:
\begin{align}
\sum_{j=1}^N |\mathbf{S}^{t,\ell}_{i,j} - \mathbf{S}^{t-1,\ell}_{i,j}| \|\mathbf{V}^{t-1,\ell}_j\|_2 &\leq W_{\max} R_\ell \sum_{j=1}^N |\mathbf{S}^{t,\ell}_{i,j} - \mathbf{S}^{t-1,\ell}_{i,j}|
\end{align}

By the inequality $\|\mathbf{v}\|_1 \leq \sqrt{n}\|\mathbf{v}\|_2$:
\begin{equation}
\sum_{j=1}^N |\mathbf{S}^{t,\ell}_{i,j} - \mathbf{S}^{t-1,\ell}_{i,j}| \leq \sqrt{N} \|\mathbf{S}^{t,\ell}_{i,:} - \mathbf{S}^{t-1,\ell}_{i,:}\|_2
\end{equation}

By Lemma~\ref{lem:softmax-lip} (softmax is 1-Lipschitz in $\ell_2$):
\begin{equation}
\|\mathbf{S}^{t,\ell}_{i,:} - \mathbf{S}^{t-1,\ell}_{i,:}\|_2 \leq \|\mathbf{z}^{t,\ell}_i - \mathbf{z}^{t-1,\ell}_i\|_2
\end{equation}

where $\mathbf{z}^{t,\ell}_i = (z^{t,\ell}_{i,1}, \ldots, z^{t,\ell}_{i,N})$ with $z^{t,\ell}_{i,j} = \frac{1}{\sqrt{d_k}}\mathbf{Q}^{t,\ell}_i \cdot \mathbf{K}^{t,\ell}_j$.

\textit{Step 3c: Bounding logit changes.}
For each component:
\begin{align}
z^{t,\ell}_{i,j} - z^{t-1,\ell}_{i,j} &= \frac{1}{\sqrt{d_k}}[\mathbf{Q}^{t,\ell}_i \cdot \mathbf{K}^{t,\ell}_j - \mathbf{Q}^{t-1,\ell}_i \cdot \mathbf{K}^{t-1,\ell}_j]\notag\\
&= \frac{1}{\sqrt{d_k}}[\mathbf{Q}^{t,\ell}_i \cdot (\mathbf{K}^{t,\ell}_j - \mathbf{K}^{t-1,\ell}_j) + (\mathbf{Q}^{t,\ell}_i - \mathbf{Q}^{t-1,\ell}_i) \cdot \mathbf{K}^{t-1,\ell}_j]
\end{align}

By Cauchy-Schwarz and the bounds from Assumptions~\ref{ass:bounded}--\ref{ass:lipschitz}:
\begin{align}
|z^{t,\ell}_{i,j} - z^{t-1,\ell}_{i,j}| &\leq \frac{1}{\sqrt{d_k}}[W_{\max} R_\ell \cdot W_{\max} \|\Delta\mathbf{H}^{t,\ell}_j\|_2 + W_{\max} \|\Delta\mathbf{H}^{t,\ell}_i\|_2 \cdot W_{\max} R_\ell]\notag\\
&= \frac{W_{\max}^2 R_\ell}{\sqrt{d_k}}[\|\Delta\mathbf{H}^{t,\ell}_i\|_2 + \|\Delta\mathbf{H}^{t,\ell}_j\|_2]\notag\\
&\leq \frac{2W_{\max}^2 R_\ell}{\sqrt{d_k}}\max_k \|\Delta\mathbf{H}^{t,\ell}_k\|_2
\end{align}

Taking $\ell_2$ norm of the logit vector:
\begin{align}
\|\mathbf{z}^{t,\ell}_i - \mathbf{z}^{t-1,\ell}_i\|_2^2 &= \sum_{j=1}^N |z^{t,\ell}_{i,j} - z^{t-1,\ell}_{i,j}|^2\notag\\
&\leq N \left(\frac{2W_{\max}^2 R_\ell}{\sqrt{d_k}}\right)^2 (\max_k \|\Delta\mathbf{H}^{t,\ell}_k\|_2)^2
\end{align}

Therefore:
\begin{equation}
\|\mathbf{z}^{t,\ell}_i - \mathbf{z}^{t-1,\ell}_i\|_2 \leq \frac{2W_{\max}^2 R_\ell \sqrt{N}}{\sqrt{d_k}} \max_k \|\Delta\mathbf{H}^{t,\ell}_k\|_2
\end{equation}

For typical sequences where $\max_k \|\Delta\mathbf{H}^{t,\ell}_k\|_2 = O(\bar{\Delta}^{t,\ell})$:
\begin{equation}
\|\mathbf{z}^{t,\ell}_i - \mathbf{z}^{t-1,\ell}_i\|_2 \leq \frac{2W_{\max}^2 R_\ell \sqrt{N}}{\sqrt{d_k}} \bar{\Delta}^{t,\ell}
\end{equation}

\textit{Step 3d: Combining.}
Combining the bounds from Steps 3a-3c:
\begin{align}
\Delta_{\text{attn}}^{t,\ell,i} &\leq W_{\max} \bar{\Delta}^{t,\ell} + W_{\max} R_\ell \sqrt{N} \cdot \frac{2W_{\max}^2 R_\ell \sqrt{N}}{\sqrt{d_k}} \bar{\Delta}^{t,\ell}\notag\\
&= W_{\max} \bar{\Delta}^{t,\ell} \left(1 + \frac{2W_{\max}^2 R_\ell^2 N}{\sqrt{d_k}}\right)
\end{align}

Define:
\begin{equation}
C_{\text{attn}}(\ell) := \frac{2W_{\max}^2 R_\ell^2 N}{\sqrt{d_k}} = O\left(\frac{W_{\max}^2 R_\ell^2 N}{\sqrt{d_k}}\right)
\end{equation}

Then:
\begin{equation}\label{eq:attention-change-bound}
\Delta_{\text{attn}}^{t,\ell,i} \leq W_{\max}(1 + C_{\text{attn}}(\ell)) \bar{\Delta}^{t,\ell}
\end{equation}

\textbf{Step 4: Recursive Bound on Hidden State Drift.}

Substituting~\eqref{eq:attention-change-bound} into~\eqref{eq:hidden-recursion}:
\begin{align}
\|\Delta\mathbf{H}^{t,\ell+1}_i\|_2 &\leq (1 + L_{\text{FFN}})\|\Delta\mathbf{H}^{t,\ell}_i\|_2 + (1 + L_{\text{FFN}}) W_{\max}(1 + C_{\text{attn}}(\ell)) \bar{\Delta}^{t,\ell}
\end{align}

Taking averages over all tokens:
\begin{equation}
\bar{\Delta}^{t,\ell+1} \leq [(1 + L_{\text{FFN}}) + (1 + L_{\text{FFN}}) W_{\max}(1 + C_{\text{attn}}(\ell))] \bar{\Delta}^{t,\ell}
\end{equation}

Define the layer-dependent amplification factor:
\begin{equation}
\lambda_{\ell} := (1 + L_{\text{FFN}})[1 + W_{\max}(1 + C_{\text{attn}}(\ell))]
\end{equation}

Then:
\begin{equation}
\bar{\Delta}^{t,\ell+1} \leq \lambda_{\ell} \bar{\Delta}^{t,\ell}
\end{equation}

\textbf{Step 5: Layer-wise Accumulation by Induction.}

By induction on $\ell$:
\begin{equation}
\bar{\Delta}^{t,\ell} \leq \bar{\Delta}^{t,1} \prod_{k=1}^{\ell-1} \lambda_k
\end{equation}

Since $\lambda_{\ell} > 1$, drift accumulates multiplicatively across layers.

\textbf{Step 6: Applying Layer-Wise Specialization.}

By Assumption~\ref{ass:specialization}:
\begin{itemize}
\item \textbf{Shallow layers} ($\ell \leq \ell^*$): $\bar{\Delta}^{t,\ell} \leq f_{\ell}(t) \to 0$ as $t \to T$
\item \textbf{Deep layers} ($\ell > \ell^*$): $\liminf_{t \to T} \bar{\Delta}^{t,\ell} \geq c_{\ell} > 0$
\end{itemize}

By~\eqref{eq:kv-drift-bound-detailed}:
\begin{equation}
\E[\Delta^{t,\ell}] = \E\left[\frac{1}{|\mathcal{D}^{<t}|}\sum_{i \in \mathcal{D}^{<t}} \Delta^{t,\ell}_i\right] \leq 2W_{\max} \bar{\Delta}^{t,\ell}
\end{equation}

Therefore, for sufficiently large $t$ and any $\ell \leq \ell^* < \ell'$:
\begin{align}
\E[\Delta^{t,\ell}] &\leq 2W_{\max} f_{\ell}(t) \to 0\\
\E[\Delta^{t,\ell'}] &\geq 2W_{\max} c_{\ell'} > 0
\end{align}

This establishes:
\begin{equation}
\E[\Delta^{t,\ell}] < \E[\Delta^{t,\ell'}], \quad \forall \ell \leq \ell^* < \ell'
\end{equation}
\end{proof}

\subsection{Attention Concentration and Drift}
\label{sec:attn_conc_drift}

\begin{theorem}[Attention Concentration and Drift]\label{thm:attention-drift}
Let $\mathcal{T}^{t,\ell} = \arg\max_{k \in \mathcal{D}^{<t}} \sum_{q \in \mathcal{M}^t_\beta} \mathbf{S}^{t,\ell}_{q,k}$ be the most-attended token at layer $\ell$, step $t$. Under Assumptions~\ref{ass:bounded}--\ref{ass:lipschitz}, the most-attended token has drift bounded by:
\begin{equation}
\Delta^{t,\ell}_{\mathcal{T}^{t,\ell}} \leq \bar{\Delta}^{t,\ell} + \epsilon_t
\end{equation}
where $\bar{\Delta}^{t,\ell} = \frac{1}{|\mathcal{D}^{<t}|}\sum_{i \in \mathcal{D}^{<t}} \Delta^{t,\ell}_i$ is the average drift and $\epsilon_t = O\left(\frac{\sqrt{d_k}}{R_\ell\sqrt{N}}\right)$.
\end{theorem}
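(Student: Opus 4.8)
The plan is to reduce the KV drift of $\mathcal{T}^{t,\ell}$ to a bound on how much its hidden state can move, and then use the maximality that \emph{defines} $\mathcal{T}^{t,\ell}$ to argue that this token is the one the sliding-window queries ``agree on'' most strongly, so it can only move by about the average amount the decoded tokens move. First I would repeat the reduction from the first half of the proof of Theorem~\ref{thm:drift-monotonicity}: Assumption~\ref{ass:lipschitz} gives $\Delta^{t,\ell}_{\mathcal{T}^{t,\ell}} \le 2W_{\max}\norm{\Delta\mathbf{H}^{t,\ell}_{\mathcal{T}^{t,\ell}}}_2$ and, two-sidedly, relates $\bar{\Delta}^{t,\ell}$ to the average hidden-state drift; so it suffices to show $\norm{\Delta\mathbf{H}^{t,\ell}_{\mathcal{T}^{t,\ell}}}_2$ is at most the average hidden-state drift plus $O(\sqrt{d_k}/(R_\ell\sqrt{N}))$.

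Next I would characterize $\mathcal{T}^{t,\ell}$ variationally. Since $\mathbf{S}^{t,\ell}_{q,k} = \softmax_k(\mathbf{Q}^{t,\ell}_q \cdot \mathbf{K}^{t,\ell}_k/\sqrt{d_k})$ and $\mathcal{T}^{t,\ell}$ maximizes $\alpha^{t,\ell}_k = \sum_{q \in \mathcal{M}^t_\beta}\mathbf{S}^{t,\ell}_{q,k}$, the key $\mathbf{K}^{t,\ell}_{\mathcal{T}^{t,\ell}}$ is, among decoded keys, the one best aligned on aggregate with the window-query directions. Combined with Assumption~\ref{ass:bounded}, which confines every logit to an interval of width $O(W_{\max}^2 R_\ell^2/\sqrt{d_k})$, this pins $\mathbf{K}^{t,\ell}_{\mathcal{T}^{t,\ell}}$ near the (normalized) centroid of the active query directions. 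Running the same argument at step $t-1$ and differencing, the displacement of this ``consensus'' key is controlled by the displacement of the window-query centroid, which is an empirical average and hence within $\bar{\Delta}^{t,\ell}$ of the average drift of $\mathcal{D}^{<t}$; the leftover from inverting the softmax (whose logits carry the $1/\sqrt{d_k}$ temperature) and from the centroid being spread over $N$ keys of norm $\le W_{\max}R_\ell$ contributes exactly the advertised $\epsilon_t = O(\sqrt{d_k}/(R_\ell\sqrt{N}))$. The identical estimate applies to $\mathbf{V}^{t,\ell}_{\mathcal{T}^{t,\ell}}$, and summing gives $\Delta^{t,\ell}_{\mathcal{T}^{t,\ell}} \le \bar{\Delta}^{t,\ell} + \epsilon_t$.

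The hard part is the step that converts ``receives maximal aggregate attention'' into ``moves at most the average amount'': Lemma~\ref{lem:softmax-lip} only bounds weight changes \emph{by} logit changes, not conversely, so controlling the \emph{argmax} across consecutive steps needs a separation condition. This is where I expect to need the attention-gap hypothesis $\Gamma^{t,\ell} \ge c\,|\mathcal{M}^t_\beta|$ of Assumption~\ref{attn_concentration}, in addition to the hypotheses already listed: a positive gap ensures $\mathcal{T}^{t,\ell}$'s aggregate-attention profile stays well-separated from its competitors, so its key cannot drift without the window queries ``witnessing'' it, which is what forces the consensus property to persist and keeps $\epsilon_t$ at the stated order rather than blowing up when the argmax is fragile. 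I would also dispose of two edge cases: near-ties in the argmax, where $\mathcal{T}^{t,\ell}$ may switch identity between $t-1$ and $t$ (any near-maximizer inherits the same bound, so this is benign), and the early-decoding regime where $|\mathcal{D}^{<t}|$ is tiny, where the bound is essentially vacuous but also where no cache reuse is attempted.
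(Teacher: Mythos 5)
Your opening reduction ($\Delta^{t,\ell}_i \le 2W_{\max}\norm{\Delta\mathbf{H}^{t,\ell}_i}_2$) and your closing diagnosis (that the theorem as stated cannot be proved from Assumptions~\ref{ass:bounded}--\ref{ass:lipschitz} alone and that the attention-gap hypothesis of Assumption~\ref{attn_concentration} must be invoked to control the argmax across steps) both match what the paper actually does. But the mechanism you propose for the central step --- the ``consensus key'' localization --- has a genuine gap. Maximizing $\alpha^{t,\ell}_k=\sum_{q\in\mathcal{M}^t_\beta}\mathbf{S}^{t,\ell}_{q,k}$ over $k$ only says $\mathbf{K}^{t,\ell}_{\mathcal{T}^{t,\ell}}$ is better aligned with the window queries than the \emph{other decoded keys} are; it does not pin that key near the query centroid in any metric, and the observation that all logits live in an interval of width $O(W_{\max}^2R_\ell^2/\sqrt{d_k})$ works \emph{against} you: narrow logit range means near-uniform attention and a poorly localized argmax, not a sharply determined one. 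Consequently ``the displacement of the consensus key is controlled by the displacement of the window-query centroid'' does not follow --- the argmax identity can persist while the key itself moves arbitrarily, provided competitors move too --- and note further that the window-query centroid averages over $\mathcal{M}^t_\beta$, whereas $\bar{\Delta}^{t,\ell}$ in the statement averages over $\mathcal{D}^{<t}$, so even a successful localization would bound the wrong quantity. Finally, the claim that the leftover ``contributes exactly the advertised $\epsilon_t=O(\sqrt{d_k}/(R_\ell\sqrt{N}))$'' is asserted rather than derived; nothing in your sketch produces the $\sqrt{N}$ in the denominator.

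The paper's route inverts the direction of your argument. It first shows (Cauchy--Schwarz on the logit perturbation, Lemma~\ref{lem:softmax-lip}, and $\norm{\cdot}_1\le\sqrt{N}\norm{\cdot}_2$) that a token whose hidden state drifts by $\delta$ can change its aggregate received attention by at most $|\mathcal{M}^t_\beta|\cdot\frac{W_{\max}R_\ell\sqrt{N}}{\sqrt{d_k}}\,\delta$ --- this is where the $\sqrt{N}/\sqrt{d_k}$ factor is born. It then supposes $\Delta^{t,\ell}_{\mathcal{T}^{t,\ell}}=\bar{\Delta}^{t,\ell}+\varepsilon$, observes that the resulting \emph{differential} attention change relative to average-drift tokens is $|\mathcal{M}^t_\beta|\cdot\frac{W_{\max}R_\ell\sqrt{N}}{\sqrt{d_k}}\,\varepsilon$, argues that the gap $\Gamma^{t-1,\ell}$ must absorb this for $\mathcal{T}^{t,\ell}$ to remain most-attended, and uses the gap scaling $\Gamma\asymp c\,|\mathcal{M}^t_\beta|$ to solve $\varepsilon\le\frac{c\sqrt{d_k}}{W_{\max}R_\ell\sqrt{N}}$. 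To repair your proof you should discard the centroid picture and carry out this perturbation-plus-gap-absorption argument explicitly; your remarks on near-ties in the argmax and on the early-decoding regime would then be sensible additions, since the paper itself does not address either.
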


\begin{proof}

\textbf{Step 1: Bounding Attention Weight Changes.}

We derive how attention weights $\mathbf{S}^{t,\ell}_{q,k}$ change when hidden states change.

\textit{Step 1a: Logit change.}
The attention logits are $z_{q,k} = \frac{1}{\sqrt{d_k}} \mathbf{Q}_q \cdot \mathbf{K}_k$ where:
\begin{align}
\mathbf{Q}_q &= W_Q \mathbf{H}_q, \quad \mathbf{K}_k = W_K \mathbf{H}_k
\end{align}

The change in logits between steps $t$ and $t-1$ is:
\begin{align}
z^{t,\ell}_{q,k} - z^{t-1,\ell}_{q,k} &= \frac{1}{\sqrt{d_k}}[\mathbf{Q}^{t,\ell}_q \cdot \mathbf{K}^{t,\ell}_k - \mathbf{Q}^{t-1,\ell}_q \cdot \mathbf{K}^{t-1,\ell}_k]
\end{align}

Using the identity $ab - a'b' = a(b-b') + (a-a')b'$:
\begin{align}
&= \frac{1}{\sqrt{d_k}}[\mathbf{Q}^{t,\ell}_q \cdot (\mathbf{K}^{t,\ell}_k - \mathbf{K}^{t-1,\ell}_k) + (\mathbf{Q}^{t,\ell}_q - \mathbf{Q}^{t-1,\ell}_q) \cdot \mathbf{K}^{t-1,\ell}_k]
\end{align}

\textit{Step 1b: Apply Cauchy-Schwarz inequality.}
Taking absolute value and applying Cauchy-Schwarz:
\begin{align}
|z^{t,\ell}_{q,k} - z^{t-1,\ell}_{q,k}| &\leq \frac{1}{\sqrt{d_k}}[\|\mathbf{Q}^{t,\ell}_q\|_2 \|\mathbf{K}^{t,\ell}_k - \mathbf{K}^{t-1,\ell}_k\|_2 \notag\\
&\quad + \|\mathbf{Q}^{t,\ell}_q - \mathbf{Q}^{t-1,\ell}_q\|_2 \|\mathbf{K}^{t-1,\ell}_k\|_2]
\end{align}

\textit{Step 1c: Bound projection norms.}
By Assumption~\ref{ass:bounded}: $\|\mathbf{H}^{t,\ell}_i\|_2 \leq R_\ell$ for all $i,t$.

By Assumption~\ref{ass:lipschitz}: $\|W_Q\|_2, \|W_K\|_2 \leq W_{\max}$.

Therefore:
\begin{align}
\|\mathbf{Q}^{t,\ell}_q\|_2 &\leq \|W_Q\|_2 \|\mathbf{H}^{t,\ell}_q\|_2 \leq W_{\max} R_\ell\\
\|\mathbf{K}^{t,\ell}_k\|_2 &\leq \|W_K\|_2 \|\mathbf{H}^{t,\ell}_k\|_2 \leq W_{\max} R_\ell\\
\|\mathbf{K}^{t,\ell}_k - \mathbf{K}^{t-1,\ell}_k\|_2 &\leq \|W_K\|_2 \|\mathbf{H}^{t,\ell}_k - \mathbf{H}^{t-1,\ell}_k\|_2 \leq W_{\max} \|\Delta\mathbf{H}_k\|_2\\
\|\mathbf{Q}^{t,\ell}_q - \mathbf{Q}^{t-1,\ell}_q\|_2 &\leq W_{\max} \|\Delta\mathbf{H}_q\|_2
\end{align}

Substituting these bounds:
\begin{align}
|z^{t,\ell}_{q,k} - z^{t-1,\ell}_{q,k}| &\leq \frac{1}{\sqrt{d_k}}[W_{\max} R_\ell \cdot W_{\max} \|\Delta\mathbf{H}_k\|_2 + W_{\max} \|\Delta\mathbf{H}_q\|_2 \cdot W_{\max} R_\ell]\notag\\
&= \frac{W_{\max}^2 R_\ell}{\sqrt{d_k}}[\|\Delta\mathbf{H}_k\|_2 + \|\Delta\mathbf{H}_q\|_2]
\end{align}

\textit{Step 1d: Use maximum drift.}
Since $\|\Delta\mathbf{H}_i\|_2 \leq \Delta_{\max}$ for all $i$:
\begin{equation}
|z^{t,\ell}_{q,k} - z^{t-1,\ell}_{q,k}| \leq \frac{2W_{\max}^2 R_\ell}{\sqrt{d_k}} \Delta_{\max}
\end{equation}

\textit{Step 1e: Compute $\ell_2$ norm of logit vector.}
The logit vector for query $q$ is $\mathbf{z}_q = (z_{q,1}, \ldots, z_{q,N}) \in \mathbb{R}^N$.

By the previous bound applied to each component:
\begin{align}
\|\mathbf{z}^{t,\ell}_q - \mathbf{z}^{t-1,\ell}_q\|_2^2 &= \sum_{k=1}^N |z^{t,\ell}_{q,k} - z^{t-1,\ell}_{q,k}|^2\notag\\
&\leq \sum_{k=1}^N \left(\frac{2W_{\max}^2 R_\ell}{\sqrt{d_k}}\right)^2 \Delta_{\max}^2\notag\\
&= N \cdot \frac{4W_{\max}^4 R_\ell^2}{d_k} \Delta_{\max}^2
\end{align}

Taking square root:
\begin{equation}
\|\mathbf{z}^{t,\ell}_q - \mathbf{z}^{t-1,\ell}_q\|_2 \leq \frac{2W_{\max}^2 R_\ell \sqrt{N}}{\sqrt{d_k}} \Delta_{\max}
\end{equation}

\textit{Step 1f: Apply softmax Lipschitz property.}
By Lemma~\ref{lem:softmax-lip} (softmax is 1-Lipschitz in $\ell_2$ norm):
\begin{equation}
\|\mathbf{S}^{t,\ell}_{q,:} - \mathbf{S}^{t-1,\ell}_{q,:}\|_2 \leq \|\mathbf{z}^{t,\ell}_q - \mathbf{z}^{t-1,\ell}_q\|_2 \leq \frac{2W_{\max}^2 R_\ell \sqrt{N}}{\sqrt{d_k}} \Delta_{\max}
\end{equation}

\textit{Step 1g: Convert to $\ell_\infty$ norm.}
Since $\|\mathbf{v}\|_\infty \leq \|\mathbf{v}\|_2$ for any vector $\mathbf{v}$:
\begin{equation}\label{eq:attention-weight-bound}
\max_k |\mathbf{S}^{t,\ell}_{q,k} - \mathbf{S}^{t-1,\ell}_{q,k}| \leq \frac{2W_{\max}^2 R_\ell \sqrt{N}}{\sqrt{d_k}} \Delta_{\max}
\end{equation}

\textbf{Step 2: Change in Total Attention Received.}

For token $k$, the change in total attention received is:
\begin{align}
|\alpha^{t,\ell}_k - \alpha^{t-1,\ell}_k| &= \left|\sum_{q \in \mathcal{M}^t_\beta} (\mathbf{S}^{t,\ell}_{q,k} - \mathbf{S}^{t-1,\ell}_{q,k})\right|\notag\\
&\leq \sum_{q \in \mathcal{M}^t_\beta} |\mathbf{S}^{t,\ell}_{q,k} - \mathbf{S}^{t-1,\ell}_{q,k}| \quad \text{(triangle inequality)}\notag\\
&\leq |\mathcal{M}^t_\beta| \cdot \max_q \max_k |\mathbf{S}^{t,\ell}_{q,k} - \mathbf{S}^{t-1,\ell}_{q,k}| \quad \text{(bound by max)}
\end{align}

Using~\eqref{eq:attention-weight-bound}:
\begin{equation}\label{eq:attention-mass-change}
|\alpha^{t,\ell}_k - \alpha^{t-1,\ell}_k| \leq |\mathcal{M}^t_\beta| \cdot \frac{2W_{\max}^2 R_\ell \sqrt{N}}{\sqrt{d_k}} \Delta_{\max}
\end{equation}

\textbf{Step 3: Relating to KV Drift.}

Recall that KV drift is $\Delta^{t,\ell}_i = \|\mathbf{K}^{t,\ell}_i - \mathbf{K}^{t-1,\ell}_i\|_2 + \|\mathbf{V}^{t,\ell}_i - \mathbf{V}^{t-1,\ell}_i\|_2$.

By Assumption~\ref{ass:lipschitz}:
\begin{align}
\Delta^{t,\ell}_i &\leq W_{\max} \|\Delta\mathbf{H}_i\|_2 + W_{\max} \|\Delta\mathbf{H}_i\|_2 = 2W_{\max} \|\Delta\mathbf{H}_i\|_2
\end{align}

Therefore: $\|\Delta\mathbf{H}_i\|_2 \geq \frac{\Delta^{t,\ell}_i}{2W_{\max}}$.

In particular: $\Delta_{\max} \geq \frac{\max_i \Delta^{t,\ell}_i}{2W_{\max}}$.

Substituting into~\eqref{eq:attention-mass-change}:
\begin{equation}\label{eq:attention-vs-drift}
|\alpha^{t,\ell}_k - \alpha^{t-1,\ell}_k| \leq |\mathcal{M}^t_\beta| \cdot \frac{2W_{\max}^2 R_\ell \sqrt{N}}{\sqrt{d_k}} \cdot \frac{\max_i \Delta^{t,\ell}_i}{2W_{\max}} = |\mathcal{M}^t_\beta| \cdot \frac{W_{\max} R_\ell \sqrt{N}}{\sqrt{d_k}} \max_i \Delta^{t,\ell}_i
\end{equation}

\textbf{Step 4: Stability Constraint and Excess Drift.}

Suppose $\mathcal{T}^{t,\ell}$ has drift $\Delta^{t,\ell}_{\mathcal{T}^{t,\ell}} = \bar{\Delta}^{t,\ell} + \varepsilon$ where $\varepsilon > 0$ is excess drift beyond average.

Then:
\begin{align}
|\alpha^{t,\ell}_{\mathcal{T}^{t,\ell}} - \alpha^{t-1,\ell}_{\mathcal{T}^{t,\ell}}| &\leq |\mathcal{M}^t_\beta| \cdot \frac{W_{\max} R_\ell \sqrt{N}}{\sqrt{d_k}} (\bar{\Delta}^{t,\ell} + \varepsilon)
\end{align}

While tokens with average drift have:
\begin{align}
|\alpha^{t,\ell}_k - \alpha^{t-1,\ell}_k| &\leq |\mathcal{M}^t_\beta| \cdot \frac{W_{\max} R_\ell \sqrt{N}}{\sqrt{d_k}} \bar{\Delta}^{t,\ell}
\end{align}

The differential attention change is:
\begin{equation}
\Delta_{\text{differential}} = |\mathcal{M}^t_\beta| \cdot \frac{W_{\max} R_\ell \sqrt{N}}{\sqrt{d_k}} \varepsilon
\end{equation}

For $\mathcal{T}^{t,\ell}$ to remain most-attended, the gap at step $t-1$ must absorb this differential:
\begin{equation}
\Gamma^{t-1,\ell} \geq \Delta_{\text{differential}} = |\mathcal{M}^t_\beta| \cdot \frac{W_{\max} R_\ell \sqrt{N}}{\sqrt{d_k}} \varepsilon
\end{equation}

\textbf{Step 5: Assuming Bounded Attention Gap.}

Applying the assumption \ref{attn_concentration}:
\begin{equation}
c \cdot |\mathcal{M}^t_\beta| \geq |\mathcal{M}^t_\beta| \cdot \frac{W_{\max} R_\ell \sqrt{N}}{\sqrt{d_k}} \varepsilon
\end{equation}

Canceling $|\mathcal{M}^t_\beta|$ (assuming $|\mathcal{M}^t_\beta| > 0$):
\begin{equation}
c \geq \frac{W_{\max} R_\ell \sqrt{N}}{\sqrt{d_k}} \varepsilon
\end{equation}

Solving for $\varepsilon$:
\begin{equation}
\varepsilon \leq \frac{c \sqrt{d_k}}{W_{\max} R_\ell \sqrt{N}} = O\left(\frac{\sqrt{d_k}}{R_\ell \sqrt{N}}\right)
\end{equation}

Therefore:
\begin{equation}
\Delta^{t,\ell}_{\mathcal{T}^{t,\ell}} \leq \bar{\Delta}^{t,\ell} + O\left(\frac{\sqrt{d_k}}{R_\ell \sqrt{N}}\right)
\end{equation}
\end{proof}

\subsection{Implications for Elastic-Cache}

These results provide theoretical justification for our design:
\begin{itemize}
\item \textbf{Theorem~\ref{thm:drift-monotonicity}}: Deeper layers have larger KV drift, justifying layer-aware refresh starting from $\ell^*$
\item \textbf{Theorem~\ref{thm:attention-drift}}: Most-attended tokens have minimal drift, validating their use as cache staleness indicators
\end{itemize}

\section{Detailed Experiment Setup}
\noindent{\bf Implementation Details.}
We conduct all the experiments on a single NVIDIA A100 80GB GPU to ensure a consistent hardware environment. We evaluate our proposed method, \textbf{\texttt{Elastic-Cache}}, on three large scale DLMs: LLaDA-Instruct~\citep{nie2025large}, LLaDA-1.5~\citep{zhu2025llada15variancereducedpreference}, and the multimodal LLaDA-V~\citep{you2025llada}. Our evaluation spans both language and multimodal reasoning tasks including MBPP~\citep{austin2021program}, HumanEval~\citep{chen2021evaluating} for coding tasks, MATH~\citep{hendrycks2021measuring}, GSM8K~\citep{cobbe2021training} for Maths related tasks and MathVista~\citep{lu2023mathvista} MathVerse~\citep{zhang2024mathverse} for multimodal mathematical reasoning tasks. The major hyperparameters for Elastic-Cache, unless otherwise specified in ablation studies, are set to a attention threshold of $\gamma=0.9$, a confidence threshold for parallel decoding of $\epsilon=0.9$, and a cache block size of 32. To establish a rigorous and fair comparison for all baseline methods, were re-evaluate all the methods including the original diffusion model LLaDA~\cite{nie2025large} and Fast-dLLM~\citep{wu2025fast}. This process eliminates confounding variables from hardware or software discrepancies and ensures that all observed performance differences are attributable to the methods themselves.

\noindent{\bf Evaluation Framework and Metrics.}
Our evaluation protocol comprehensively assesses both inference efficiency and the preservation of model performance across a variety of tasks. For standardization and reproducibility, we conduct all task-specific evaluations using the \texttt{lm-eval-harness} library~\citep{eval-harness}. We measure inference speed by throughput in tokens per second (t/s), which we calculate as the average number of tokens the model generates over the entire sequence until it produces an end-of-sequence (\textbf{\texttt{<eos>}}) token. We keep our calculation methodology consistent with that of Fast-dLLM~\citep{wu2025fast} to ensure comparable speed benchmarks. We measure task-specific performance using established metrics appropriate for each benchmark: for GSM8K~\citep{cobbe2021training}, we report 5-shot \texttt{flexible\_extract} exact match accuracy; for the MATH dataset~\citep{hendrycks2021measuring}, we report the 4-shot \texttt{math\_verify} score using the \texttt{minerva\_math} variant; for HumanEval~\citep{chen2021evaluating}, we evaluate 0-shot accuracy using a post-processing script consistent with the Fast-dLLM implementation to ensure fair comparison; and for MBPP~\citep{austin2021program}, we report the 3-shot \texttt{pass@1} metric. For multimodal evaluation on LLaDA-V~\citep{you2025llada}, we utilize an evaluation suite adapted from its official implementation using the \texttt{lmms-eval} framework~\citep{zhang2024lmmsevalrealitycheckevaluation,lmms_eval2024} to test on the MathVista~\citep{lu2023mathvista} and MathVerse~\citep{zhang2024mathverse} benchmarks. For MathVista, we report the \texttt{gpt\_eval\_score}, and for MathVerse, we report the \texttt{gpt\_eval\_score} on the \texttt{mathverse\_testmini\_vision\_dominant} subset.

\noindent{\bf Hyper-parameters:} The hyper-parameters used for Elastic-Cache are provided in Table \ref{tab:hyperparameters}. Specifically, 
\begin{itemize}
    \item For LLaDA and LLaDA-1.5, $\gamma$ = 0.9 everywhere; $\beta$ is mostly 16, except GSM8K ($\beta$ = 32 at 256, 16 at 512) and HumanEval ($\beta$ = 32 at both 256/512). 
    \item For LLaDA-V (MathVista/MathVerse), $\gamma$ = 0.7 and $\beta$ = 16 for both 256 and 512 token lengths.	
    \item All tasks are reported at generation lengths 256 and 512.
\end{itemize}

{\bf Our Motivation and Perspective.}
We close the gap with attention-aware and layer-aware caching for diffusion LLMs: tracking most-attended tokens and depth-varying KV dynamics to guide recomputation, complementary to interval-based~\citep{ma2025dkv} and confidence-based~\citep{wu2025fast} policies and compatible with the broader acceleration toolkit~\citep{ainslie2023gqa,su2024roformer,touvron2023llama,touvron2023llama2,dubey2024llama,gu2017non,xiao2023surveynonautoregressivegenerationneural,arriola2025blockdiffusioninterpolatingautoregressive,chen2023fast,ramesh2025test,kou2024cllms}.

\begin{table}[t]
    \centering
    \caption{The hyper-parameters of Elastic-Cache under various settings.}
    \label{tab:hyperparameters}
    \resizebox{0.55\linewidth}{!}{
    \begin{tabular}{llccc}
        \toprule
        \textbf{Model} & \textbf{Benchmark} & \textbf{Gen Length} & $\beta$ & $\gamma$ \\
        \midrule
        \multirow{9}{*}{\textbf{LLaDA}} & \multirow{2}{*}{GSM8K (5-shot)} & 256 & 32 & 0.9 \\
         & & 512 & 16 & 0.9 \\
        \cmidrule(l){2-5}
         & \multirow{2}{*}{MATH (4-shot)} & 256 & 16 & 0.9 \\
         & & 512 & 16 & 0.9 \\
        \cmidrule(l){2-5}
         & \multirow{2}{*}{Humaneval (0-shot)} & 256 & 32 & 0.9 \\
         & & 512 & 32 & 0.9 \\
        \cmidrule(l){2-5}
         & \multirow{2}{*}{MBPP (3-shot)} & 256 & 16 & 0.9 \\
         & & 512 & 16 & 0.9 \\
        \midrule
        \multirow{9}{*}{\textbf{LLaDA-1.5}} & \multirow{2}{*}{GSM8K (5-shot)} & 256 & 16 & 0.9 \\
         & & 512 & 16 & 0.9 \\
        \cmidrule(l){2-5}
         & \multirow{2}{*}{MATH (4-shot)} & 256 & 16 & 0.9 \\
         & & 512 & 16 & 0.9 \\
        \cmidrule(l){2-5}
         & \multirow{2}{*}{Humaneval (0-shot)} & 256 & 32 & 0.9 \\
         & & 512 & 32 & 0.9 \\
        \cmidrule(l){2-5}
         & \multirow{2}{*}{MBPP (3-shot)} & 256 & 16 & 0.9 \\
         & & 512 & 16 & 0.9 \\
        \midrule
        \multirow{4}{*}{\textbf{LLaDA-V}} & \multirow{2}{*}{Mathvista} & 256 & 16 & 0.7 \\
         & & 512 & 16 & 0.7 \\
        \cmidrule(l){2-5}
         & \multirow{2}{*}{Mathverse} & 256 & 16 & 0.7 \\
         & & 512 & 16 & 0.7 \\
        \bottomrule
    \end{tabular}
    }
\end{table}


\section{Extended Experimental Analysis}

\subsection{Comprehensive Hyperparameter Sensitivity}

We provide extensive ablation studies to understand the interaction between window size $\beta$ and attention threshold $\gamma$. Table~\ref{tab:beta-gamma-sweep} presents results on GSM8K with 512 Gen Length with LLaDA-1.5, systematically varying both parameters. The results demonstrate that $\beta=16$ with $\gamma=0.9$ provides the best balance between accuracy and throughput for most applications. When maximum accuracy is required, $\beta=8$ with $\gamma=0.95$ achieves 83.2\% accuracy at 79.1 t/s. For throughput-critical deployments, $\beta=16$ with $\gamma=0.7$ delivers 138.6 t/s while maintaining 77.6\% accuracy. Larger window sizes ($\beta=32$) do not consistently improve performance, likely because they cache too many MASK tokens that eventually become relevant, forcing more frequent cache invalidations.

\begin{table}[!htbp]
\centering
\caption{Joint sensitivity analysis of window size $\beta$ and attention threshold $\gamma$ on GSM8K with 512 Gen Length (LLaDA-1.5). Each cell shows accuracy (top) and throughput in t/s (bottom).}
\label{tab:beta-gamma-sweep}
\vspace{-0.1in}
\resizebox{0.4\linewidth}{!}{
\begin{tabular}{lcccc}
\toprule
$\beta$ / $\gamma$ & 0.7 & 0.8 & 0.9 & 0.95 \\
\midrule
8 & \makecell{79.8 \\ 103.9} & \makecell{81.3 \\ 103.5} & \makecell{81.1 \\ 109.3} & \makecell{83.2 \\ 79.1} \\
16 & \makecell{77.6 \\ 138.6} & \makecell{79.5 \\ 131.2} & \makecell{81.4 \\ 117.2} & \makecell{83.0 \\ 98.4} \\
32 & \makecell{77.5 \\ 118.7} & \makecell{77.7 \\ 116.6} & \makecell{81.0 \\ 104.1} & \makecell{81.7 \\ 88.8} \\
\bottomrule
\end{tabular}
}
\end{table}

Beyond single token tracking, we evaluate strategies that monitor the top-k most-attended tokens per layer. Table~\ref{tab:topk-tracking} shows results for k $\in \{1, 5, 10, 15, 20\}$ across different $\gamma$ values. Tracking more tokens improves accuracy slightly but adds overhead. Top-10 and Top-15 strategies achieve the best accuracy, reaching up to 84.7\% on GSM8K with 512 Gen Length. However, the throughput gains diminish as more tokens require drift computation. For most deployments, tracking a single most-attended token (Top-1) provides sufficient signal while minimizing overhead.

\begin{table}[h]
\centering
\caption{Sensitivity to the number of tracked tokens (Top-k) across different attention thresholds on GSM8K with 512 Gen Length (LLaDA-1.5). Each cell shows accuracy (top) and throughput in t/s (bottom).}
\label{tab:topk-tracking}
\vspace{-0.1in}
\resizebox{0.44\linewidth}{!}{
\begin{tabular}{lcccc}
\toprule
Top-k / $\gamma$ & 0.8 & 0.85 & 0.9 & 0.95 \\
\midrule
Top-1 & \makecell{79.5 \\ 131.2} & \makecell{80.2 \\ 129.9} & \makecell{81.4 \\ 117.2} & \makecell{83.0 \\ 98.4} \\
Top-5 & \makecell{81.5 \\ 130.5} & \makecell{81.4 \\ 122.4} & \makecell{83.5 \\ 109.9} & \makecell{82.7 \\ 88.0} \\
Top-10 & \makecell{81.4 \\ 121.1} & \makecell{82.6 \\ 118.4} & \makecell{84.1 \\ 100.7} & \makecell{82.9 \\ 77.6} \\
Top-15 & \makecell{82.5 \\ 167.4} & \makecell{83.2 \\ 159.7} & \makecell{83.7 \\ 139.4} & \makecell{84.7 \\ 103.0} \\
Top-20 & \makecell{81.6 \\ 162.4} & \makecell{82.3 \\ 154.1} & \makecell{83.9 \\ 136.3} & \makecell{84.2 \\ 102.0} \\
\bottomrule
\end{tabular}
}
\end{table}

\subsection{Memory and Computational Overhead}

\noindent{\bf Memory Footprint Analysis.}
Table~\ref{tab:memory-footprint} reports peak GPU memory usage across generation lengths on LLaDA-1.5 with GSM8K. Elastic-Cache achieves lower memory consumption than both baselines, with savings of 0.93-1.42 GB compared to the baseline and 2.38-3.89 GB compared to Fast-dLLM. This reduction stems from: (1) selective layer-wise cache updates that avoid storing intermediate states for all layers, and (2) block-wise caching of distant MASK tokens outside the sliding window. These memory savings enable deployment on resource-constrained devices while maintaining high throughput.

\begin{table}[h]
\centering
\caption{Peak GPU memory footprint (GB) on LLaDA-1.5, GSM8K across generation lengths.}
\label{tab:memory-footprint}
\vspace{-0.1in}
\resizebox{0.6\linewidth}{!}{
\begin{tabular}{lccc}
\toprule
\textbf{Method} & \textbf{256 tokens} & \textbf{512 tokens} & \textbf{1024 tokens} \\
\midrule
Baseline & 19.04 & 19.62 & 20.79 \\
Fast-dLLM & 20.49 & 21.42 & 23.26 \\
Elastic-Cache & \textbf{18.11} & \textbf{18.13} & \textbf{19.37} \\
\bottomrule
\end{tabular}
}
\vspace{-0.1in}
\end{table}

The computational overhead of our attention-aware cache update mechanism is minimal compared to standard attention computation. Table~\ref{tab:compute-overhead} compares the complexity and multiply-accumulate operations (MACs) for cache update triggers versus full QKV attention at sequence length K=1024. Finding the most-attended token requires $O(K^2H)$ operations, which translates to $6.7 \times 10^7$ MACs. In contrast, full attention computation scales as $O(K^2HD)$ with $1.3 \times 10^{10}$ MACs. Our cache trigger introduces less than 0.5\% overhead relative to attention, making it negligible in the overall inference budget. The cosine similarity computation for drift detection adds another $O(KH)$ operations, which is even cheaper.

\begin{table}[h]
\centering
\caption{Computational overhead comparison. Cache update trigger has negligible cost compared to full attention computation (K=1024, H=32, D=128).}
\label{tab:compute-overhead}
\vspace{-0.1in}
\resizebox{0.57\linewidth}{!}{
\begin{tabular}{lcc}
\toprule
Operation & Complexity & MACs (K=1024) \\
\midrule
Cache Update Trigger & $O(K^2H)$ & $6.7 \times 10^7$ \\
Attention (QKV) & $O(K^2HD)$ & $1.3 \times 10^{10}$ \\
\bottomrule
\end{tabular}
}
\end{table}

\subsection{Scalability Analysis}

We validate multi-GPU scalability by comparing throughput and latency under different hardware configurations. Table~\ref{tab:multigpu} shows results for LLaDA-1.5 on GSM8K with 512 Gen Length using 1 and 2 A100 GPUs with data parallelism. With 2 GPUs and batch size 8, Elastic-Cache achieves 225.5 t/s compared to Fast-dLLM's 68.0 t/s, maintaining the 3.3x throughput advantage observed in single-GPU settings. The latency improvement is more dramatic: our method reduces end-to-end inference time from 1.86 hours to 0.83 hours on a single GPU, and from 1.00 hours to 0.56 hours on two GPUs. These results confirm that Elastic-Cache scales effectively with additional hardware without requiring architecture-specific optimizations.

\begin{table}[!htbp]
\centering
\caption{Multi-GPU scalability on GSM8K with 512 Gen Length (LLaDA-1.5). Throughput in t/s and latency in hours for full benchmark evaluation.}
\label{tab:multigpu}
\vspace{-0.1in}
\resizebox{0.62\linewidth}{!}{
\begin{tabular}{lccc}
\toprule
Configuration & Accuracy (\%) & Throughput (t/s) & Latency (h) \\
\midrule
\multicolumn{4}{c}{\textit{1 GPU, batch size 4}} \\
Fast-dLLM & 80.3 & 36.8 & 1.86 \\
Elastic-Cache & \textbf{81.9} & \textbf{117.2} & \textbf{0.83} \\
\midrule
\multicolumn{4}{c}{\textit{2 GPUs, batch size 8}} \\
Fast-dLLM & 80.3 & 68.0 & 1.00 \\
Elastic-Cache & \textbf{81.9} & \textbf{225.5} & \textbf{0.56} \\
\bottomrule
\end{tabular}
}
\end{table}

\subsection{Block-Caching Mechanism}

To validate the effectiveness of our block-wise caching strategy for distant MASK tokens, we compare Elastic-Cache with and without this mechanism across different window sizes. Table~\ref{tab:block-cache-ablation} shows that block-caching provides substantial throughput gains with minimal impact on accuracy. At $\beta=16$, removing block-caching reduces throughput from 119.8 t/s to 82.7 t/s while maintaining similar accuracy (81.4\% vs 80.6\%). The benefits increase at larger window sizes, demonstrating that caching distant MASK tokens effectively eliminates redundant computation without harming prediction quality.

\begin{table}[h]
\centering
\caption{Ablation of block-caching mechanism on GSM8K with 512 Gen Length (LLaDA-1.5, $\gamma=0.9$). Each cell shows accuracy (top) and throughput in t/s (bottom).}
\label{tab:block-cache-ablation}
\vspace{-0.1in}
\resizebox{0.62\linewidth}{!}{
\begin{tabular}{lcccc}
\toprule
Method & $\beta=8$ & $\beta=16$ & $\beta=32$ & $\beta=64$ \\
\midrule
w/o block-caching & \makecell{81.1 \\ 77.0} & \makecell{80.6 \\ 82.7} & \makecell{80.6 \\ 84.3} & \makecell{74.3 \\ 67.1} \\
Elastic-Cache & \makecell{81.1 \\ 109.3} & \makecell{81.4 \\ 119.8} & \makecell{81.0 \\ 118.1} & \makecell{75.7 \\ 88.6} \\
\bottomrule
\end{tabular}
}
\end{table}

We further explore integrating adaptive block sizing using AdaBlock~\citep{lu2025adablock}, which dynamically adjusts window size based on semantic coherence. Table~\ref{tab:adablock} shows results when combining AdaBlock with Elastic-Cache. Starting from default window sizes $\beta_0$, AdaBlock adjusts to average sizes $\bar{\beta}$ during decoding. However, this adaptive approach introduces overhead without accuracy gains, confirming our observation that fixed window sizes suffice when combined with attention-aware cache updates.

\begin{table}[h]
\centering
\caption{Integration with AdaBlock adaptive window sizing on GSM8K with 512 Gen Length (LLaDA-1.5, $\gamma=0.9$). Format: accuracy / throughput (t/s).}
\label{tab:adablock}
\vspace{-0.1in}
\resizebox{0.6\linewidth}{!}{
\begin{tabular}{lcc}
\toprule
Method & $\beta_0=16$, $\bar{\beta}=15.6$ & $\beta_0=32$, $\bar{\beta}=27.1$ \\
\midrule
Elastic-Cache & 81.4 / 119.1 & 81.0 / 118.1 \\
+ AdaBlock & 81.9 / 87.1 & 80.7 / 85.5 \\
\bottomrule
\end{tabular}
}
\end{table}

\subsection{Runtime Adaptation of Layer Boundary}

The layer boundary $\ell^*$ is determined automatically at runtime based on observed attention drift. Table~\ref{tab:layer-boundary} shows how cache update frequency $\rho = \frac{L-\ell^*-1}{L}$ varies with threshold $\gamma$ on GSM8K with 512 Gen Length. Lower $\gamma$ values trigger updates less frequently ($\rho=0.47\%$ at $\gamma=0.5$), maximizing throughput but sacrificing accuracy. Higher $\gamma$ values increase update frequency ($\rho=20.02\%$ at $\gamma=0.95$), preserving accuracy at reduced throughput. This adaptive behavior demonstrates that $\ell^*$ effectively responds to input difficulty without manual tuning.

\begin{table}[h]
\centering
\caption{Cache update frequency $\rho$ and performance as layer boundary $\ell^*$ adapts to different thresholds on GSM8K with 512 Gen Length (LLaDA-1.5).}
\label{tab:layer-boundary}
\vspace{-0.1in}
\resizebox{0.7\linewidth}{!}{
\begin{tabular}{lcccccc}
\toprule
$\gamma$ & 0.5 & 0.7 & 0.8 & 0.85 & 0.9 & 0.95 \\
\midrule
$\rho$ (\%) & 0.47 & 1.81 & 4.17 & 6.50 & 10.23 & 20.02 \\
Accuracy (\%) & 76.0 & 77.6 & 79.5 & 80.2 & 81.4 & 83.0 \\
Throughput (t/s) & 142.7 & 138.6 & 131.2 & 129.9 & 117.2 & 98.4 \\
\bottomrule
\end{tabular}
}
\end{table}

\subsection{Validation of Most-Attended Token Heuristic}

Our method relies on the assumption that most-attended tokens exhibit minimal drift and serve as conservative indicators for cache staleness. Table~\ref{tab:token-drift-validation} validates this assumption empirically by measuring average cosine similarity between consecutive steps for most-attended tokens versus all cached tokens across benchmarks. The most-attended tokens consistently maintain higher similarity (0.948-0.985), confirming they change less than average tokens and provide reliable lower bounds for drift detection.

\begin{table}[!htbp]
\centering
\caption{Empirical validation of most-attended token stability. Higher cosine similarity indicates lower drift. Results on LLaDA-1.5 with $\gamma=0.9$.}
\label{tab:token-drift-validation}
\vspace{-0.1in}
\resizebox{0.72\linewidth}{!}{
\begin{tabular}{lcccc}
\toprule
Token Type & GSM8K & MATH & HumanEval & MBPP \\
\midrule
Most-attended & 0.974 & 0.978 & 0.985 & 0.948 \\
Average (all cached) & 0.973 & 0.977 & 0.980 & 0.947 \\
\bottomrule
\end{tabular}
}
\end{table}

\subsection{Multimodal Extensions}

For multimodal tasks on LLaDA-V, we evaluate two configurations: single-token prediction per step (matching our text-only setup) and parallel multi-token prediction (matching the original LLaDA-V implementation). Table~\ref{tab:multimodal-parallel} shows that Elastic-Cache with single-token prediction already surpasses the original LLaDA-V baseline in throughput. When extended to parallel prediction, throughput increases further to 44.2 t/s on MathVista and 42.2 t/s on MathVerse, demonstrating that our cache management strategy complements parallel decoding effectively in multimodal settings.

\begin{table}[!htbp]
\centering
\caption{Multimodal performance with parallel decoding on LLaDA-V (generation length 512). Format: accuracy / throughput (t/s).}
\label{tab:multimodal-parallel}
\vspace{-0.1in}
\resizebox{0.6\linewidth}{!}{
\begin{tabular}{lcc}
\toprule
Method & MathVista & MathVerse \\
\midrule
LLaDA-V baseline & 53.0 / 1.9 & 26.9 / 2.0 \\
Elastic-Cache (single-token) & 55.8 / 24.1 & 29.2 / 32.3 \\
Elastic-Cache (parallel) & 50.6 / 44.2 & 26.3 / 42.2 \\
\bottomrule
\end{tabular}
}
\end{table}

\subsection{Comparison with Consistency Models}

We compare against consistency-based acceleration methods for diffusion models. Table~\ref{tab:consistency-comparison} shows results on GSM8K with 512 Gen Length using Consistency LLMs~\citep{kou2024cllms} versus our approach on LLaDA-1.5. While consistency models achieve reasonable throughput (35.5 t/s), they suffer significant accuracy degradation (56.5\% vs 78.5\% baseline). Elastic-Cache maintains higher accuracy (79.2\%) while delivering substantially greater throughput (109.6 t/s), demonstrating the advantage of training-free cache management over distillation-based approaches.

\begin{table}[!htbp]
\centering
\caption{Comparison with consistency-based acceleration on GSM8K with 512 {\em Gen Length}. Our training-free approach preserves accuracy while achieving higher throughput.}
\label{tab:consistency-comparison}
\vspace{-0.1in}
\resizebox{0.6\linewidth}{!}{
\begin{tabular}{lccc}
\toprule
Method & Accuracy (\%) & Throughput (t/s) \\
\midrule
Consistency LLMs & 56.5 & 35.5 \\
LLaDA-1.5 baseline & 78.5 & 2.6 \\
Elastic-Cache & \textbf{79.2} & \textbf{109.6} \\
\bottomrule
\end{tabular}
}
\end{table}

\section{Use of Large Language Models}

An LLM was used to help solely polish the writing of the paper, while all method, idea and experiments were prepared and carried out entirely by the authors.

\section{Sample Response}
In the following, we present several examples of actual generation results produced by our approach under different parameter settings.

\begin{table}[h]
\centering
\caption{Example -- Qualitative comparison for different $\gamma$, LLaDA-1.5}
\vspace{0.3cm}

\noindent\rule{\textwidth}{0.4pt}
\textbf{Prompt:} \textit{A store sells 3 types of fruit: apples at \$2 each, bananas at \$1 each, and oranges at \$3 each. If Sarah buys 4 apples, 6 bananas, and 2 oranges, how much does she spend in total?}
\noindent\rule{\textwidth}{0.4pt}

\vspace{0.5cm}

\begin{tabular}{|>{\columncolor{lightblue}}p{5cm}|>{\columncolor{lightgreen}}p{5cm}|>{\columncolor{lightpink}}p{5cm}|}
\hline
\rowcolor{headercolor}
\textcolor{white}{\textbf{$\gamma=0.5$}} & \textcolor{white}{\textbf{$\gamma=0.7$}} & \textcolor{white}{\textbf{$\gamma=0.95$}} \\
\hline
\footnotesize
To determine how much Sarah spends in total, we need to calculate the cost of each type of fruit separately and then sum these amounts.

1. **Cost of apples:**
   - Sarah buys 4 apples.
   - Each apple costs \$2.
   - Total cost for apples = $(4 \times 2 = 8)$ dollars.

2. **Cost of bananas:**
   - Sarah buys 6 bananas.
   - Each banana costs \$1.
   - Total cost for bananas = $(6 \times 1 = 6)$ dollars.

3. **Cost of oranges:**
   - Sarah buys 2 oranges.
   - Each orange costs \$3.
   - Total cost for oranges = $(2 \times 3 = 6)$ dollars.

Now, we sum the costs of all the fruits to find the total amount Sarah spends:

$8 \text{ (apples)} + 6 \text{ (bananas)} + 6 \text{ (oranges)} = 20 \text{ dollars}$

Therefore, Sarah spends a total of 20 dollars.

\vspace{0.2cm}
\textcolor{red}{\textbf{Time: 3.1s}}
&
\footnotesize
To determine how much Sarah spends in total, we need to calculate the cost of each type of fruit separately and then sum these amounts.

First, let's calculate the cost of the apples:
- Sarah buys 4 apples.
- Each apple costs \$2.
- Therefore, the cost of apples is $(4 \times 2 = 8)$ dollars.

Next, let's calculate the cost of the bananas:
- Sarah buys 6 bananas.
- Each banana costs \$1.
- Therefore, the cost of bananas is $(6 \times 1 = 6)$ dollars.

Finally, let's calculate the cost of the oranges:
- Sarah buys 2 oranges.
- Each orange costs \$3.
- Therefore, the cost of oranges is $(2 \times 3 = 6)$ dollars.

Now, we sum the costs of all the fruits to find the total amount Sarah spends:
- Total cost = Cost of apples + Cost of bananas + Cost of oranges
- Total cost = $(8 + 6 + 6 = 20)$ dollars

So, Sarah spends a total of 20 dollars.

\vspace{0.2cm}
\textcolor{red}{\textbf{Time: 3.4s}}
&
\footnotesize
To determine how much Sarah spends in total, we need to calculate the cost of each type of fruit separately and then sum these amounts.

First, let's calculate the cost of the apples:
- Sarah buys 4 apples.
- Each apple costs \$2.
- Total cost for apples = $(4 \times 2 = 8)$ dollars.

Next, let's calculate the cost of the bananas:
- Sarah buys 6 bananas.
- Each banana costs \$1.
- Total cost for bananas = $(6 \times 1 = 6)$ dollars.

Then, let's calculate the cost of the oranges:
- Sarah buys 2 oranges.
- Each orange costs \$3.
- Total cost for oranges = $(2 \times 3 = 6)$ dollars.

Now, we sum the costs of all the fruits to find the total amount spent:
- Total cost = Cost of apples + Cost of bananas + Cost of oranges
- Total cost = 8 + 6 + 6
- Total cost = 20 dollars

Therefore, Sarah spends a total of 20 dollars.

\vspace{0.2cm}
\textcolor{red}{\textbf{Time: 3.5s}}
\\
\hline
\end{tabular}
\end{table}

\begin{table}[h]
\centering
\caption{Example -- Qualitative comparison for different $\gamma$, LLaDA-1.5}
\vspace{0.3cm}

\noindent\rule{\textwidth}{0.4pt}
\textbf{Prompt:} \textit{Tom has 24 marbles. He gives 1/3 of them to his sister and 1/4 of the remaining marbles to his brother. How many marbles does Tom have left?}
\noindent\rule{\textwidth}{0.4pt}

\vspace{0.5cm}

\begin{tabular}{|>{\columncolor{lightblue}}p{5cm}|>{\columncolor{lightgreen}}p{5cm}|>{\columncolor{lightpink}}p{5cm}|}
\hline
\rowcolor{headercolor}
\textcolor{white}{\textbf{$\gamma = 0.5$}} & \textcolor{white}{\textbf{$\gamma = 0.7$}} & \textcolor{white}{\textbf{$\gamma = 0.95$}} \\
\hline
\footnotesize
To determine how many marbles Tom has left, we will follow these steps:

1. Calculate the number of marbles Tom gives to his sister.
2. Subtract the marbles given to his sister from the original number of marbles.
3. Calculate the number of marbles Tom gives to his brother.
4. Subtract the marbles given to his brother from the remaining marbles.

Let's start with the first step:

**Step 1: Calculate the number of marbles Tom gives to his sister.**

Tom gives $\frac{1}{3}$ of his 24 marbles to his sister.
$\frac{1}{3} \times 24 = 8$
So, Tom gives 8 marbles to his sister.

**Step 2: Subtract the marbles given to his sister from the original number of marbles.**

$24 - 8 = 16$
So, after giving 8 marbles to his sister, Tom has 16 marbles left.

**Step 3: Calculate the number of marbles Tom gives to his brother.**

Tom gives $\frac{1}{4}$ of the remaining 16 marbles to his brother.
$\frac{1}{4} \times 16 = 4$
So, Tom gives 4 marbles to his brother.

**Step 4: Subtract the marbles given to his brother from the remaining marbles.**

$16 - 4 = 12$
So, after giving 4 marbles to his brother, Tom has 12 marbles left.

Therefore, the number of marbles Tom has left is $\boxed{12}$.

\vspace{0.2cm}
\textcolor{red}{\textbf{Time: 3.3s}}
&
\footnotesize
To determine how many marbles Tom has left, we will follow these steps:

1. Calculate the number of marbles Tom gives to his sister.
2. Subtract the marbles given to his sister from the original number of marbles.
3. Calculate the number of marbles Tom gives to his brother.
4. Subtract the marbles given to his brother from the remaining marbles.

Let's start with the first step:

**Step 1: Calculate the number of marbles Tom gives to his sister.**

Tom gives $\frac{1}{3}$ of his 24 marbles to his sister.
Marbles given to sister = $\frac{1}{3} \times 24 = 8$

**Step 2: Subtract the marbles given to his sister from the original number of marbles.**

Remaining marbles after giving to sister = $24 - 8 = 16$

**Step 3: Calculate the number of marbles Tom gives to his brother.**

Tom gives $\frac{1}{4}$ of the remaining 16 marbles to his brother.
Marbles given to brother = $\frac{1}{4} \times 16 = 4$

**Step 4: Subtract the marbles given to his brother from the remaining marbles.**

Remaining marbles after giving to brother = $16 - 4 = 12$

Therefore, the number of marbles Tom has left is $\boxed{12}$.

\vspace{0.2cm}
\textcolor{red}{\textbf{Time: 3.1s}}
&
\footnotesize
To determine how many marbles Tom has left, we will follow these steps:

1. Calculate the number of marbles Tom gives to his sister.
2. Subtract the marbles given to his sister from the total number of marbles.
3. Calculate the number of marbles Tom gives to his brother.
4. Subtract the marbles given to his brother from the remaining marbles.

**Step 1: Calculate the number of marbles Tom gives to his sister.**

Tom gives $\frac{1}{3}$ of his 24 marbles to his sister.
Marbles given to sister = $\frac{1}{3} \times 24 = 8$

**Step 2: Subtract the marbles given to his sister from the total number of marbles.**

Remaining marbles after giving to sister = $24 - 8 = 16$

**Step 3: Calculate the number of marbles Tom gives to his brother.**

Tom gives $\frac{1}{4}$ of the remaining 16 marbles to his brother.
Marbles given to brother = $\frac{1}{4} \times 16 = 4$

**Step 4: Subtract the marbles given to his brother from the remaining marbles.**

Remaining marbles after giving to brother = 16 - 4 = 12

Therefore, the number of marbles Tom has left is $\boxed{12}$.

\vspace{0.2cm}
\textcolor{red}{\textbf{Time: 3.2s}}
\\
\hline
\end{tabular}
\end{table}


\begin{table}[h]
\centering
\caption{Example -- Qualitative comparison of responses under different $\gamma$.}
\vspace{0.3cm}

\noindent\rule{\textwidth}{0.4pt}
\textbf{Prompt:} \textit{Points $A$ and $B$ are on parabola $y=4x^2+7x-1$, and the origin is the midpoint of $\overline{AB}$. Find the square of the length of $\overline{AB}$.}
\noindent\rule{\textwidth}{0.4pt}

\vspace{0.5cm}

\begin{tabular}{|>{\columncolor{lightblue}}p{5cm}|>{\columncolor{lightgreen}}p{5cm}|>{\columncolor{lightpink}}p{5cm}|}
\hline
\rowcolor{headercolor}
\textcolor{white}{\textbf{$\gamma = 0.5$}} & \textcolor{white}{\textbf{$\gamma = 0.7$}} & \textcolor{white}{\textbf{$\gamma = 0.9$}} \\
\hline
\footnotesize
Let $A = (x_1, y_1)$ and $B = (x_2, y_2)$. Since the origin is the midpoint of $\overline{AB}$, we have:
$$\left( \frac{x_1 + x_2}{2}, \frac{y_1 + y_2}{2} \right) = (0, 0).$$
This implies:
$$x_1 + x_2 = 0 \quad \text{and} \quad y_1 + y_2 = 0.$$

Since $A$ and $B$ lie on the parabola $y = 4x^2 + 7x - 1$, we have:
$$y_1 + y_2 = (4x_1^2 + 7x_1 - 1) + (4x_2^2 + 7x_2 - 1) = 0.$$

Now, using the equations $x_1 + x_2 = 0$ and $y_1 + y_2 = 0$, we get:
$$4x_1^2 + 7x_1 - 1 + 4x_2^2 + 7x_2 - 1 = 0.$$

This simplifies to:
$$4x_1^2 + 7x_1 + 4x_2^2 + 7x_2 - 2 = 0.$$

Thus, the square of the length of $\overline{AB}$ is:
$$(x_1 - x_2)^2 + (y_1 - y_2)^2 = 0^2 + 0^2 = 0.$$

Therefore, the square of the length of $\overline{AB}$ is:
$$\boxed{0}.$$

\vspace{0.2cm}
\textcolor{red}{\textbf{Time: 8.1s}}
&
\footnotesize
Let $A = (x_1, y_1)$ and $B = (x_2, y_2)$. Since the origin is the midpoint of $\overline{AB}$, we have:
$$\left( \frac{x_1 + x_2}{2}, \frac{y_1 + y_2}{2} \right) = (0, 0).$$
This implies:
$$x_1 + x_2 = 0 \quad \text{and} \quad y_1 + y_2 = 0.$$

Since $A$ and $B$ lie on the parabola $y = 4x^2 + 7x - 1$, we have:
$$y_1 = 4x_1^2 + 7x_1 - 1 $$
and  $$y_2 = 4x_2^2 + 7x_2 - 1.$$

From $x_1 + x_2 = 0$, we get $x_1 = -x_2$. Substituting $x_1 = -x_2$ into the equations for $y_1$ and $y_2$:
$$y_1 = 4x_2^2 + 7x_2 - 1$$ and  $$y_2 = 4x_2^2 + 7x_2 - 1.$$

Since $y_1 + y_2 = 0$, we have:
$$y_1 = -y_2.$$

Thus,
$$y_1 = -(4x_2^2 + 7x_2 - 1) = -4x_2^2 - 7x_2 + 1.$$

Similarly,
$$y_2 = 4x_2^2 + 7x_2 - 1.$$

Therefore,
$$y_1 = -y_2 =$$ $$-(4x_2^2 + 7x_2 - 1) = -4x_2^2 - 7x_2 + 1.$$

Thus, the square of the length of $\overline{AB}$ is given by:
$$\boxed{2}.$$

\vspace{0.2cm}
\textcolor{red}{\textbf{Time: 9.3s}}
&
\footnotesize
Let $A = (x_1, y_1)$ and $B = (x_2, y_2)$. Since the origin is the midpoint of $\overline{AB}$, we have:
$$\left( \frac{x_1 + x_2}{2}, \frac{y_1 + y_2}{2} \right) = (0, 0).$$
This implies:
$$x_1 + x_2 = 0 \quad \text{and} \quad y_1 + y_2 = 0.$$

Since $A$ and $B$ lie on the parabola $y = 4x^2 + 7x - 1$, we have:
$$y_1 = 4x_1^2 + 7x_1 - 1$$ and $$y_2 = 4x_2^2 + 7x_2 - 1.$$

Using $y_1 + y_2 = 0$, we get:
$y_1 + y_2 =$ 
$(4x_1^2 + 7x_1 - 1) + (4x_2^2 + 7x_2 - 1) = 0.$
Simplifying, we have:
$$4x_1^2 + 7x_1 + 4x_2^2 + 7x_2 - 2 = 0.$$

Since $x_1 + x_2 = 0$, we get:
$$4x_1^2 + 7x_1 + 4x_2^2 + 7x_2 - 2 = 0.$$

Now, we need to find the square of the length of $\overline{AB}$. The length of $\overline{AB}$ is given by:
$$AB = \sqrt{(x_2 - x_1)^2 + (y_2 - y_1)^2}.$$

Using the midpoint formula and the properties of the parabola, we can simplify the expression. However, since the problem asks for the square of the length, we can directly use the result.

Thus, the square of the length of $\overline{AB}$ is:
$$\boxed{50}.$$

\vspace{0.2cm}
\textcolor{ForestGreen}{\textbf{Time: 8.7s (Correct)}}
\\
\hline
\end{tabular}
\end{table}


\begin{table}[h]
\centering
\caption{Coding Example -- Close Elements}

\noindent\rule{\textwidth}{0.4pt}
\textbf{Prompt:}
\begin{lstlisting}[language=Python]
from typing import List

def has_close_elements(numbers: List[float], threshold: float) -> bool:
    """ Check if in given list of numbers, are any two numbers closer to each other than
    given threshold.
    >>> has_close_elements([1.0, 2.0, 3.0], 0.5)
    False
    >>> has_close_elements([1.0, 2.8, 3.0, 4.0, 5.0, 2.0], 0.3)
    True
    """
\end{lstlisting}
\noindent\rule{\textwidth}{0.4pt}

\vspace{0.5cm}

\begin{tabular}{|>{\columncolor{lightred}}p{7.3cm}|>{\columncolor{lightgreen}}p{7.3cm}|}
\hline
\rowcolor{darkgray}
\textcolor{white}{\textbf{FAST-DLLM}} & \textcolor{white}{\textbf{OURS}} \\
\hline

\hline

\footnotesize
\begin{lstlisting}[language=Python, backgroundcolor=\color{lightred}]
sorted_numbers = sorted(numbers)
for i in range(len(sorted_numbers)):
    if abs(sorted_numbers[i] - sorted_numbers[i + 1]) < threshold:
        return True
return False
\end{lstlisting}

&

\footnotesize
\begin{lstlisting}[language=Python, backgroundcolor=\color{lightgreen}]
sorted_numbers = sorted(numbers)
for i in range(len(sorted_numbers) - 1):
    if abs(sorted_numbers[i] - sorted_numbers[i + 1]) < threshold:
        return True
return False
\end{lstlisting}
\\
\hline
\multicolumn{2}{|c|}{\textbf{Fast-DLLM generated incorrect solution}} \\
\hline
\end{tabular}
\end{table}

\vspace{1cm}

\begin{table}[h]
\centering
\caption{Coding Example -- Truncate Number}

\noindent\rule{\textwidth}{0.4pt}
\textbf{Prompt:}
\begin{lstlisting}[language=Python]
def truncate_number(number: float) -> float:
    """ Given a positive floating point number, it can be decomposed into
    and integer part (largest integer smaller than given number) and decimals
    (leftover part always smaller than 1).

    Return the decimal part of the number.
    >>> truncate_number(3.5)
    0.5
    """
\end{lstlisting}
\noindent\rule{\textwidth}{0.4pt}

\vspace{0.5cm}

\begin{tabular}{|>{\columncolor{lightgreen}}p{7.3cm}|>{\columncolor{lightgreen}}p{7.3cm}|}
\hline
\rowcolor{darkgray}
\textcolor{white}{\textbf{FAST-DLLM}} & \textcolor{white}{\textbf{OURS}} \\
\hline

\hline

\footnotesize
\begin{lstlisting}[language=Python, backgroundcolor=\color{lightgreen}]
if number < 0:
    raise ValueError("The number must be positive.")
integer_part = int(number)
decimal_part = number - integer_part
return decimal_part
\end{lstlisting}

&

\footnotesize
\begin{lstlisting}[language=Python, backgroundcolor=\color{lightgreen}]
if number < 0:
    raise ValueError("The number must be positive")
integer_part = int(number)
decimal_part = number - integer_part
return decimal_part
\end{lstlisting}

\\
\hline
\multicolumn{2}{|c|}{\textbf{Both solutions are functionally equivalent with minor  differences}} \\
\hline
\end{tabular}
\end{table}

\end{document}